\documentclass{article}

\PassOptionsToPackage{colorlinks=true,citecolor=blue,linkcolor=blue,urlcolor=blue}{hyperref}

\usepackage[preprint]{icml2026}
\usepackage{hyperref}

\usepackage[T1]{fontenc}
\usepackage[utf8]{inputenc}

\usepackage{microtype}
\usepackage{graphicx}
\usepackage{subcaption}
\usepackage{booktabs} 
\usepackage{wrapfig}
\usepackage{multirow}
\usepackage{nicefrac}
\usepackage{dsfont}
\usepackage[dvipsnames]{xcolor}
\usepackage{enumitem}
\usepackage{tikz}
\usepackage[most]{tcolorbox}
\usepackage{listings}

\usepackage{amsmath}
\usepackage{amssymb}
\usepackage{amsfonts}
\usepackage{mathtools}
\usepackage{amsthm}

\makeatletter
\@ifpackageloaded{natbib}{\setcitestyle{apa,square}}{}
\makeatother

\usepackage[nameinlink]{cleveref}

\crefname{theorem}{Theorem}{Theorems}
\Crefname{theorem}{Theorem}{Theorems}
\crefname{lemma}{Lemma}{Lemmas}
\Crefname{lemma}{Lemma}{Lemmas}

\makeatletter

\expandafter\let\csname algorithm*\endcsname\relax
\expandafter\let\csname endalgorithm*\endcsname\relax

\makeatother

\usepackage[ruled,boxed,noend,linesnumbered]{algorithm2e}
\SetKwInOut{Input}{Input}
\SetKwInOut{Init}{Initialization}
\SetKwInOut{Output}{Output}
\DontPrintSemicolon


\crefname{algocf}{algorithm}{algorithms}
\Crefname{algocf}{Algorithm}{Algorithms}
\crefname{AlgoLine}{line}{lines}
\Crefname{AlgoLine}{Line}{Lines}



\theoremstyle{plain}
\newtheorem{theorem}{Theorem}[section]

\newtheorem{lemma}[theorem]{Lemma}

\theoremstyle{definition}

\theoremstyle{remark}

\usepackage[textsize=tiny]{todonotes}
\definecolor{mybluebg}{HTML}{CBDCEC}
\definecolor{myblueborder}{HTML}{689CC8}
\tcbset{
  modelbox/.style={
    enhanced, breakable,
    colback=mybluebg, colframe=myblueborder,
    boxrule=0.8pt, arc=2pt,
    left=6pt, right=6pt, top=6pt, bottom=6pt,
    fonttitle=\bfseries, title={#1}
  }
}

\newcommand{\RedDown}[1]{\textcolor{red!70!black}{\scriptsize$\downarrow$#1}}
\newcommand{\GreenDown}[1]{\textcolor{green!50!black}{\scriptsize$\downarrow$#1}}

\newcommand{\cellann}[3][\deltavshift]{%
  \begin{tikzpicture}[baseline=(n.base)]
    \node[inner sep=0pt, outer sep=0pt] (n) {#2\/};
    \path[use as bounding box] (n.south west) rectangle (n.north east);
    \node[anchor=north west, inner sep=0pt, xshift=.15em, yshift=#1]
         at (n.north east) {\scriptsize #3};
  \end{tikzpicture}%
}

\definecolor{dkgreen}{rgb}{0,0.6,0}
\definecolor{gray}{rgb}{0.5,0.5,0.5}
\definecolor{mauve}{rgb}{0.58,0,0.82}
\definecolor{lightblue}{rgb}{0.9,0.95,1.0}
\definecolor{lightgray}{rgb}{0.9,0.9,0.9}
\let\hat\widehat
\let\tilde\widetilde

\let\oldcheck\check
\newcommand{\widecheck}[1]{\oldcheck{#1}} 
\let\check\widecheck

\newcommand{\norm}[1]{\lVert#1\rVert}
\newcommand{\abs}[1]{\lvert#1\rvert}

\newcommand{\defeq}{\coloneqq}
\newcommand{\argmax}{\operatorname*{arg\,max}}

\newcommand{\EE}{\mathbb{E}}

\newcommand{\cD}{\mathcal{D}}

\newcommand{\cL}{\mathcal{L}}

\icmltitlerunning{LACONIC: Length-Aware Constrained Reinforcement Learning for LLM}

\begin{document}

\twocolumn[
  \icmltitle{LACONIC: Length-Aware Constrained Reinforcement Learning for LLM}



  \icmlsetsymbol{equal}{*}

  \begin{icmlauthorlist}
    \icmlauthor{Chang Liu}{ucla}
    \icmlauthor{Yiran Zhao}{salesforce}
    \icmlauthor{Lawrence Liu}{ucla}
    \icmlauthor{Yaoqi Ye}{nus}
    \icmlauthor{Csaba Szepesv\'{a}ri}{alberta}
    \icmlauthor{Lin F. Yang}{ucla}
  \end{icmlauthorlist}

  \icmlaffiliation{ucla}{University of California, Los Angeles}
  \icmlaffiliation{salesforce}{Salesforce AI Research}
  \icmlaffiliation{nus}{National University of Singapore}
  \icmlaffiliation{alberta}{University of Alberta}

  \icmlcorrespondingauthor{Chang Liu}{changliu11@ucla.edu}

  \icmlkeywords{Machine Learning, ICML}

  \vskip 0.3in
]



\printAffiliationsAndNotice{}  

\begin{abstract}

     Reinforcement learning (RL) has enhanced the capabilities of large language models (LLMs) through reward-driven training. Nevertheless, this process can introduce excessively long responses, inflating inference latency and computational overhead. Prior length-control approaches typically rely on fixed heuristic reward shaping, which can misalign with the task objective and require brittle tuning. In this work, we propose \texttt{LACONIC}, a reinforcement learning method that enforces a target token budget during training. Specifically, we update policy models using an augmented objective that combines the task reward with a length-based cost. To balance brevity and task performance, the cost scale is adaptively adjusted throughout training. This yields robust length control while preserving task reward. We provide a theoretical guarantee that support the method. Across mathematical reasoning models and datasets, \texttt{LACONIC} preserves or improves \texttt{pass@1} while reducing output length by over 50\%. It maintains out-of-domain performance on general knowledge and multilingual benchmarks with 44\% fewer tokens. Moreover, \texttt{LACONIC} integrates into standard RL-tuning with no inference changes and minimal deployment overhead.
 
\end{abstract}

\section{Introduction}

\begin{figure}[!b]
  \centering
  \includegraphics[width=\columnwidth]{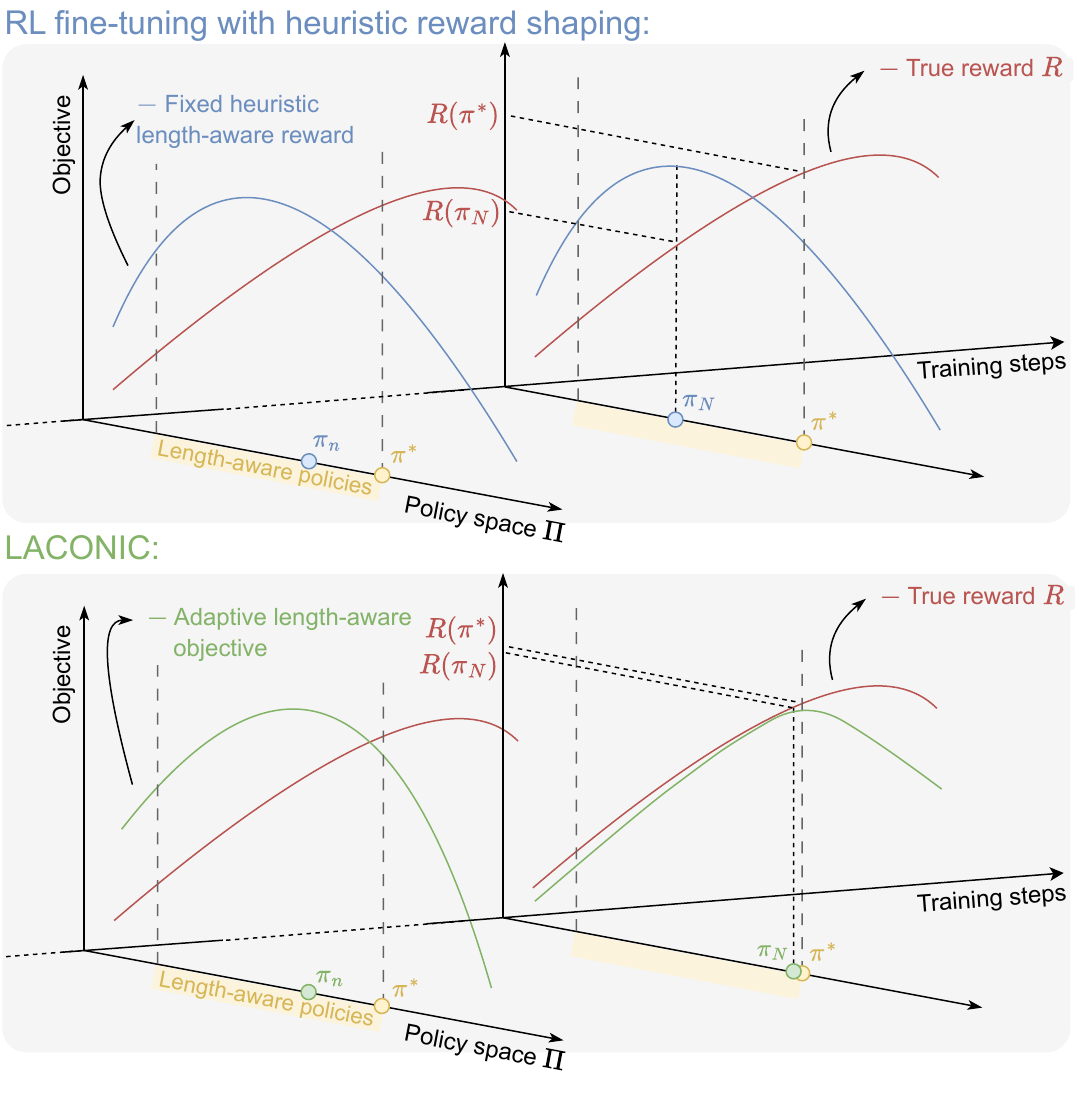}
  \caption{The top panel skethces RL-tuning with a fixed length-aware shaping objective (blue). As the heuristically shaped objective generally differs from the true task reward $R$ (red), optimizing it may converge to a policy $\pi_N$ that is suboptimal in $R$. The bottom panel sketches training with \texttt{LACONIC}. \texttt{LACONIC} adaptively updates the length-aware objective (green) so that it better aligns with the true task reward while achieving shorter outputs, yielding near-optimal policies.}
  \label{fig:illustration_1}
\end{figure}

\begin{figure*}[t]
    \includegraphics[width=\textwidth]{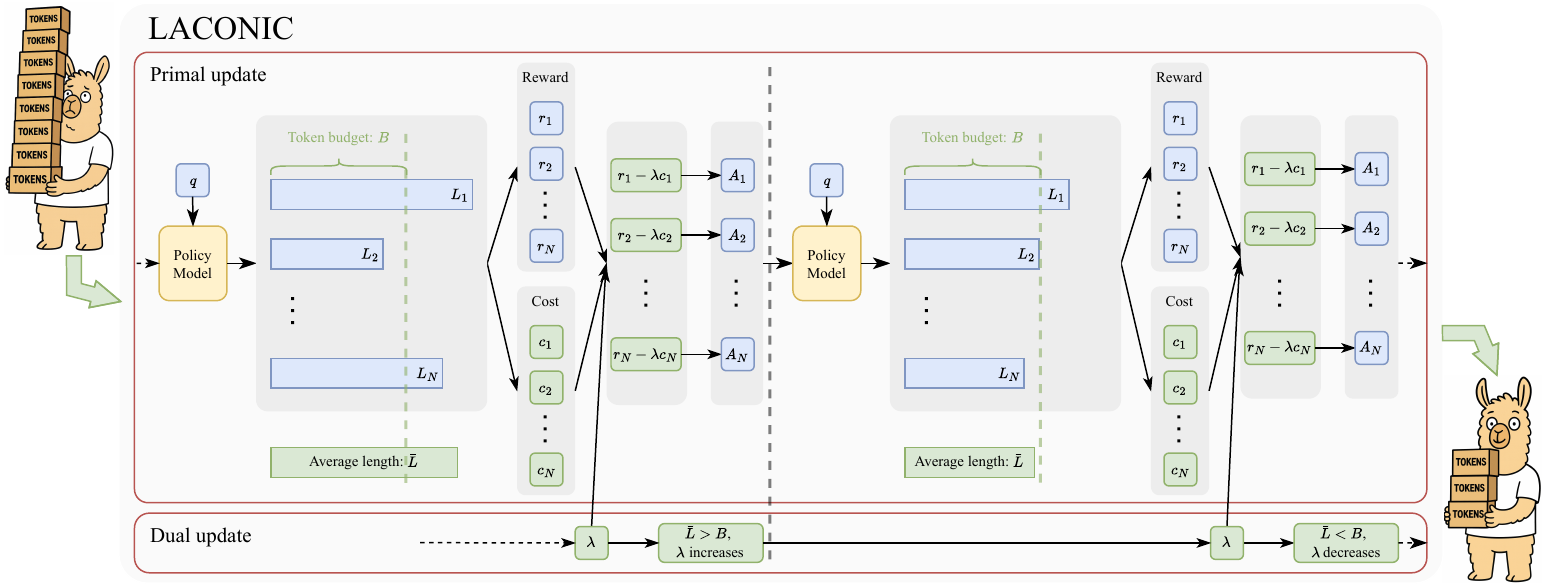}
    \caption{Illustration of LACONIC. LACONIC alternates two steps: (1) in a primal update, the policy model is updated on an augmented objective that trades off task reward $r$ with a length-aware cost $c$ scaled by the dual variable $\lambda$; (2) in a dual update, $\lambda$ is adaptively updated to enforce a token budget constraint $B$ by increasing when the average length $\bar{L}$ exceeds the budget $B$ and decreasing otherwise. Together, these updates maximize task reward while meeting the budget on average.}\vspace{-0.5cm}\label{fig:laconic}
\end{figure*}

Large language models (LLMs) such as GPT \citep{openai2024openaio1card, gpt5}, Gemini \citep{comanici2025gemini}, DeepSeek \citep{deepseekai2025deepseekr1incentivizingreasoningcapability}, and Claude\citep{claude-4.1} have witnessed unprecedented success in its applications from software agents to enterprise analytics \citep{team2025kimi, li2025webthinker, jin2025search, feng2025retool}. The impressive capabilities of LLMs have been significantly enhanced by reinforcement learning based fine-tuning \citep{li2025search, wu2025agentic, li2025webthinker}, a procedure that aligns pretrained models with task-specific rewards through interaction with an environment. This process has been pivotal in refining LLM reasoning skills, enhancing generalization, and achieving state-of-the-art performance across diverse benchmarks \citep{wang2024qimprovingmultistepreasoning, hsiao2025a, shi2025toollearningwildempowering, Qu2025}. However, RL-tuned language models often suffer from generating unnecessarily long thinking traces. This problem is particularly acute on reasoning and mathematics tasks, where the model is asked to spell out logical steps \citep{chen2025think23overthinkingo1like,sui2025stopoverthinkingsurveyefficient}.  In practice, excessive verbosity inflates training and inference time, increases memory pressure, and ultimately degrades user experience. 

Recent work on length-aware LLMs has explored positional encoding, prompt engineering, and post-generation truncation \citep{li2025search, wu2025agentic, li2025webthinker}. A straightforward method is to design new reward functions to incorporate response length signals into RL-tuning \citep{aggarwal2025l1controllinglongreasoning,cheng2025incentivizingdualprocessthinking,huang2025hapotraininglanguagemodels,yuan2025efficientrltrainingreasoning}. These methods typically hard-code a length penalty or heuristic reward shaping that stays fixed throughout training. Fine-tuning with these rewards optimizes a surrogate objective that is misaligned with true task reward, and often demands per-setting hyperparameter tuning. The top panel in \cref{fig:illustration_1} visualizes this by showing a sketch of the training process. The fixed heuristic objective (blue curve) differs from the true objective (red curve), so optimizing it yields policies with suboptimal task rewards. 

In this paper, we address length control in RL-tuning by maximizing task reward subject to an average token budget constraint. We introduce \texttt{LACONIC} (\textbf{L}ength-\textbf{A}ware \textbf{Con}strained Pol\textbf{ic}y Optimization), a primal-dual algorithm. During training, the model samples candidate responses for each prompt. Besides the task reward (e.g., correctness or usefulness) as in the standard RL-tuning, we assign to each candidate response also a length-aware cost proportional to its budget violation. We then construct a learning signal that combines task reward with length cost, scaled by an adaptively learned multiplier. We alternatively update the policy model and the multiplier. The policy is updated by a policy optimization step where advantages and objectives are calculated from the constructed signal. Then we update the multiplier based on the average response length of the current batch. We raise the multiplier if the current batch violates the token budget constraint on average, and lower it if the current batch falls short. This feedback automatically steers the model's average output length towards the token budget. Particularly, when the model consistently stays within the token budget, the multiplier naturally drops to zero and our training steps reduce to standard RL-tuning steps, allowing the model to recover task rewards with shortened responses. As illustrated in \cref{fig:illustration_1}, \texttt{LACONIC} adopts an adaptive objective that dynamically align the optimization with true rewards, steering the policy towards the length-aware optimum.

\vspace{-0.1cm}


In \cref{subsec:theory-clipped}, we provide theoretical analysis of our approach. Under standard assumptions, we establish convergence guarantees and bound the resulting model’s near-optimality.

\vspace{-0.1cm}

We conduct extensive experiments to evaluate our method \texttt{LACONIC} and present the evaluation results in \cref{sec:experiment}. We apply \texttt{LACONIC} to fine-tune two reasoning models DeepScaleR-1.5B-Preview \citep{deepscaler2025} and DeepSeek-R1-Distill-Qwen-1.5B \citep{deepseekai2025deepseekr1}. The experimental results show that our \texttt{LACONIC}-tuned models can significantly outperform existing length control baselines L1 \citep{aggarwal2025l1controllinglongreasoning}, \cite{arora2025traininglanguagemodelsreason}, and ThinkPrune \citep{hou2025thinkprune} and mostly preserve the performance of the full-length base models on pass@1 across common mathematics benchmarks, while reducing response lengths by using fewer tokens. \texttt{LACONIC} also preserves accuracy on benchmarks outside our RL-tuning domain while substantially reducing response length.
Furthermore, we perform ablation analysis on the hyperparameters of our method in \cref{sec:analysis}, showing that \texttt{LACONIC} provides robust length control.

\section{Methodology}


\subsection{Preliminary Background}
RL fine-tuning casts text generation as a sequential decision process, where the prompt $q$ together with the partial output sequence constitutes the state, selecting the next token is the action, and the language model parameterized by $\theta$ serves as the policy $\pi_\theta$ that maps states to action probabilities. After generating the response, the model receives task rewards $r(q,o)$ assigned by a reward model. Policy gradient algorithms such as Proximal Policy Optimization (PPO) \citep{schulman2017proximalpolicyoptimizationalgorithms} and Group Relative Policy Optimization (GRPO) \citep{shao2024deepseekmathpushinglimitsmathematical} then translate such rewards into token-level gradients, so that the model can be updated to increase the corpus-level expected rewards, i.e, 
\begin{equation}\label{eq:RL-tuning objective}
    \max_\theta\EE_{q\sim P(Q), o\sim\pi_\theta(\cdot|q)}[r(q,o)].    
\end{equation}
In practice, GRPO updates the policy model's parameters $\theta$ by optimizing the following surrogate objective
\begin{equation}\label{eq:GRPO_obj}
\begin{aligned}
    \mathcal{J}&(\theta)
    =\mathbb{E}_{q,\{o_i\}_{i=1}^G}\!\Bigl[
    \frac{1}{G}\sum_{i=1}^G \frac{1}{|o_i|}\sum_{t=1}^{|o_i|}
    \min(\rho_{i,t}A_{i,t},\,\! \\
    &\operatorname{clip}(\rho_{i,t},1\!-\!\varepsilon,1\!+\!\varepsilon)A_{i,t})
    -\beta D_{\mathrm{KL}}[\pi_\theta\|\pi_{\mathrm{ref}}]\Bigr].
\end{aligned}
\end{equation}
where $\rho_{i,t} = \frac{\pi_{\theta}(o_{i,t}\mid q, o_{i,<t})}{\pi_{\theta_{\mathrm{old}}}(o_{i,t}\mid q, o_{i,<t})}$ is the likelihood ratio, and $A_{i,t}$ is the group-relative advantage. The likelihood ratio clipping and an extra KL-divergence penalty are adopted to stabilize policy updates.



\subsection{Formulation}\label{sec:formulation}


To explicitly control response lengths, we extend standard RL-tuning to a constrained setting that maximizes task reward under an average token constraint $B$, a pre-specified budget reflecting deployment targets such as latency and computational resources in practice. Formally, letting $L(o)$ denote the length of response $o$, we address
\begin{equation}\label{eq:CMDP}
\begin{aligned}
    &\max_\theta\EE_{q\sim P(Q), o\sim\pi_\theta(\cdot|q)}[r(q,o)], \\
    &\,\textrm{s.t. }\; \EE_{q\sim P(Q), o\sim\pi_\theta(\cdot|q)}[L(o)] \le B.
\end{aligned}
\end{equation}
In \cref{eq:CMDP}, we enforce a corpus-level average token budget rather than a strict per-sequence length constraint, as response lengths naturally vary across prompts (e.g., math olympiad problems typically require more tokens than simple arithmetic). This allows allocating more tokens to hard instances while still meeting the overall budget constraint.


A standard approach to \cref{eq:CMDP} is Lagrangian primal-dual optimization. Introduce a dual variable $\lambda\ge 0$ and Lagrangian
$\mathcal{L}(\theta,\lambda):= \mathbb{E}[r(q,o)]-\lambda\left(\mathbb{E}[L(o)]/B-1\right),$
then the constrained problem \cref{eq:CMDP} can be addressed by optimizing the Lagrangian over policy models $\theta$ while adjusting $\lambda\ge 0$, i.e., $\max_\theta\min_{\lambda\ge 0}\cL(\theta,\lambda)$. Deferring the derivation to \cref{app:ablation_cost}, we obtain the following primal and dual updates:
\begin{equation}\label{eq:ideal_primal}
    \theta_{t+1}\in \argmax_{\theta}\ \mathbb{E} \left[r(q,o)-\lambda_t\,\frac{L(o)-B}{B}\right],
\end{equation}
\begin{equation}\label{eq:ideal_dual}
    \lambda_{t+1} = \max\{\lambda_t+\eta\left(\mathbb{E}[L(o)]/B-1\right), 0\}.
\end{equation}
The primal update \cref{eq:ideal_primal} is analogous to standard RL-tuning in \cref{eq:RL-tuning objective}. It maximizes expected reward augmented by a linear length-aware cost $\tilde{c} \defeq (L(o) - B)/B$ weighted by $\lambda$. However, this linear cost $\tilde{c}$ can be problematic in practice: when $\lambda>0$, the linear cost $\tilde{c}$ consistently incentivizes shorter outputs, and drives the policy towards extremely short responses. This can cause unstable policy updates in training. We observe this behavior in practice and report the empirical results in \cref{app:ablation_cost}.




\subsection{\texttt{LACONIC}: Length-Aware Constrained Policy Optimization}\label{sec:laconic}
We propose \texttt{LACONIC}, a primal-dual method for length-aware RL tuning that preserves the principled constrained-optimization structure, while resolving the collapse to overly short outputs induced by the linear length cost $\tilde{c}$.

\textbf{Clipped cost.} We introduce a clipped cost that is zero up to the budget. Specifically, for a prompt $q$ and a response $o$ generated by the policy model $\pi_\theta$, we define
\begin{equation}\label{eq:cost}
    c(q,o) = \max\left\{\frac{L(o) - B}{B}, 0\right\}.
\end{equation}\vspace{-0.5cm}

This clipped cost assigns equal cost to within-budget responses ($L(o)\le B$). Therefore, it removes artificial pressure to shorten responses that already meet the token budget.

\begin{algorithm}[t]
\caption{\texttt{LACONIC} (Length-Aware Constrained Policy Optimization)}\label{algo}
\LinesNumbered
        \KwIn{initial policy model $\pi_{\theta_{\textrm{init}}}$; reward models $r_\varphi$; task prompts $\cD$; token budget $B$; step size $\eta$; initial dual variable $\lambda_{\textrm{init}}$}
        policy model $\pi_\theta\leftarrow \pi_{\theta_{\textrm{init}}}$\;
        dual variable $\lambda \leftarrow \lambda_{\textrm{init}}$\;
        \For{iteration = $1, \dots, I$}{
            reference model $\pi_{\textrm{ref}} \leftarrow \pi_\theta$\;
            \For{step = $1, \dots, M$}{
                Sample a batch $\cD_b$ from $\cD$\;
                Update the old policy model $\pi_{\theta_{\textrm{old}}}\leftarrow \pi_\theta$\;
                Sample $G$ outputs $\{o_i\}_{i=1}^G\sim \pi_{\theta_{\textrm{old}}}(\cdot|q)$ for each question $q\in \cD_b$\;
                Compute rewards $\{r_i\}_{i=1}^G$ for each sampled output $o_i$ by running $r_\varphi$\;
                Compute costs $\{c_i\}_{i=1}^G$ for each sampled output $o_i$ by \cref{eq:cost}\;
                Compute Lagrangian rewards $\{\ell_{\lambda,i}\}_{i=1}^G$ for each sample output by \cref{eq:Lag_reward}\;
                Compute advantages $A_{i,t}$ for the $t$-th token of $o_i$ by \cref{eq:advantage}\;
                \tcp{Primal update}
                Update the policy model $\pi_\theta$ by maximizing the GRPO-style objective \nllabel{line:primal} \;
                \tcp{Dual update}
                Update the dual variable $\lambda$ by \cref{eq:dual-update}\;
            }
        }
        \KwOut{$\pi_\theta$}
\end{algorithm}

\textbf{Primal updates.} In the primal update, we update the policy model $\theta$ by holding $\lambda_t$ fixed and solving \cref{eq:ideal_primal}. This objective has the same form as standard RL-tuning in \cref{eq:RL-tuning objective} with a shaped reward. Therefore, we instantiate the primal update using the RL-tuning procedure (e.g., GRPO) by substituting the task reward $r$ with the Lagrangian reward:
\begin{equation}\label{eq:Lag_reward}
    \ell_\lambda(q,o) = r(q,o) - \lambda\cdot c(q,o),
\end{equation}
where we replace the linear cost with our clipped cost $c$ to stabilize policy updates.

Specifically, for each prompt $q$, we sample a group of candidate outputs $\mathbf{o}=\{o_1,o_2,\dots,o_G\}$ from the current policy model $\pi_\theta$, and compute their task rewards $\mathbf{r}=\{r_1,r_2,\dots,r_G\}$ and costs $\mathbf{c} = \{c_1,c_2,\dots,c_G\}$ by \cref{eq:cost}. We then compute the Lagrangian rewards $\pmb{\ell}_\lambda = \{\ell_{\lambda,1},\ell_{\lambda,2},\dots,\ell_{\lambda,G}\}$ by \cref{eq:Lag_reward}. For each token $o_{i,t}$, we construct the GRPO-style advantage by normalizing Lagrangian rewards $\ell_{\lambda,i}$ within the group, i.e.,
\begin{equation}\label{eq:advantage}
    A_{i,t} = \tilde{\ell}_{\lambda,i} = \frac{\ell_{\lambda,i} - \mathrm{mean}(\pmb{\ell}_\lambda)}{\mathrm{std}(\pmb{\ell}_\lambda)}.
\end{equation}
The policy model is optimized by maximizing the GRPO objective in \cref{eq:GRPO_obj} where advantages are calculated by \cref{eq:advantage}. 

\textbf{Dual updates.} In the dual update, we adjust the dual variable $\lambda$ by estimating the expectation in \cref{eq:ideal_dual} with the empirical mean response length $\bar{L}$ of the current minibatch. This yields the practical dual update
\begin{equation}\label{eq:dual-update}
    \lambda \leftarrow \textrm{clip}\left(\lambda + \eta\left(\frac{\bar{L}}{B} - 1\right), 0, \Lambda\right),
\end{equation}
with the step size $\eta$ and a $\lambda$-ceiling $\Lambda$. When the batch violates the token budget on average ($\bar{L}>B$), the update increases $\lambda$, raising the effective price of tokens in $\ell_\lambda$. Longer responses then receive lower (often negative) advantages than shorter responses with similar task rewards, so the next primal update shifts the policy toward shorter outputs. When the batch falls within the budget, $\lambda$ relaxes towards 0. Notably, when $\lambda=0$, $\ell_\lambda$ reduces to the task reward $r$, and the next primal update is exactly a GRPO step. This feedback adapts $\lambda$ to track the budget constraint throughout training as the policy and length distribution evolve. 

We cap $\lambda$ by a ceiling $\Lambda$ to avoid over-penalizing length. For an indicator reward $r(q,o)\in\{0,1\}$, an excessively large $\lambda$ can make a within-budget incorrect response (with $\ell_\lambda=0$) score above a correct but long response. A sufficient safeguard is to require $\ell_\lambda(q,o_c) > 0$ for any correct response $o_c$, which requires $\lambda < \frac{B}{L(o_c)-B}$ for $L(o_c)>B$. Using the worst case $L(o_c) \le L_{\max}$, where $L_{\max}$ is the maximum response length cap, it is sufficient to set $\Lambda = \frac{B}{L_{\max}-B}$.


We present \texttt{LACONIC} in \cref{algo} and illustrate the workflow with two sample steps in \cref{fig:laconic}. 

\section{Theoretical Results}
\label{subsec:theory-clipped}
As discussed in \cref{sec:laconic}, \texttt{LACONIC} performs policy (primal) updates with a clipped cost $\max\{\frac{L}{B} - 1, 0\}$, while updating the dual variable $\lambda$ using the average linear cost $\frac{\bar{L}}{B} - 1$ in \cref{eq:dual-update}. We refer to this combination as the clipped-cost primal-dual update, and we provide the theoretical guarantee in this section.

\textbf{Notation.} Let $\EE_\pi[\cdot] = \EE_{q\sim P(Q),\,o\sim\pi(\cdot|q)}[\cdot]$, and $(x)_+ = \max\{x,0\}$. Define the shorthands $R(\pi)=\EE_\pi[r(q,o)]$, $\tilde C(\pi)=\EE_\pi\!\left[\tfrac{L(o)-B}{B}\right]$, and $C(\pi)=\EE_\pi\!\left[\tfrac{(L(o)-B)_+}{B}\right]$. Then \cref{eq:CMDP} can be written as $\max_\pi R(\pi)$ s.t.\ $\tilde C(\pi)\le 0$, and let $\pi^\star\in\argmax_{\pi:\tilde{C}(\pi)\le 0} R(\pi)$ be an optimal feasible policy.

\vspace{0.05cm}To isolate the effect of clipping, consider an idealized setting where expectations are exact and each primal step finds an exact maximizer.
The idealized clipped-cost primal-dual updates are
\begin{equation}\label{eq:ideal_dyn_primal}
    \pi_t \in \argmax_{\pi}\left\{R(\pi)-\lambda C(\pi)\right\},
\end{equation}
\begin{equation}\label{eq:ideal_dyn_dual}
\lambda_{t+1}=\textrm{clip}\left(\lambda_t+\eta\,\tilde{C}(\pi), 0, \Lambda\right).
\end{equation}
\Cref{app:clipped-convergence} shows that \cref{eq:ideal_dyn_primal,eq:ideal_dyn_dual} converge to a feasible limit pair $(\pi^\sharp,\lambda^\sharp)$ under mild regularity conditions.
Then \cref{thm:price_of_clipping} quantifies the reward gap between $\pi^\sharp$ and the optimal policy $\pi^\star$, with the proof deferred to \cref{app:clipped-convergence}.

\begin{theorem}[Price of clipped cost]\label{thm:price_of_clipping}
Let $\pi^\star\in\max_{\pi:\tilde{C}(\pi)\le 0} R(\pi)$ be an optimal feasible policy of the length-constrained problem in \cref{eq:CMDP}. Let $(\pi^\sharp,\lambda^\sharp)$ be the feasible limit of the idealized clipped-cost primal-dual updates in \cref{eq:ideal_dyn_primal,eq:ideal_dyn_dual}. Then
\begin{equation}\label{eq:price_of_clipping_prelim}
0 \le R(\pi^\star)-R(\pi^\sharp)\le \lambda^\sharp C(\pi^*).
\end{equation}
Moreover, for indicator rewards with the $\lambda$-ceiling $\Lambda = \frac{B}{L_{\max} - B}$ and a maximum length cap $L(o)\le L_{\max}$, we have
\begin{equation}\label{eq:price_of_clipping}
0 \le R(\pi^\star)-R(\pi^\sharp)\le \frac{B}{L_{\max}}.
\end{equation}
\end{theorem}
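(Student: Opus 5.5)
The plan is to use only two facts about the limit pair. First, $\pi^\sharp$ is feasible, so $\tilde C(\pi^\sharp)\le 0$. Second, since it is a fixed point of \cref{eq:ideal_dyn_primal}, $\pi^\sharp$ is an exact maximizer of the clipped Lagrangian $R(\pi)-\lambda^\sharp C(\pi)$ over all policies. Neither the convergence argument nor any complementary slackness is needed beyond these.

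\textbf{Lower bound.} Because $\pi^\sharp$ is feasible and $\pi^\star$ maximizes $R$ over the feasible set, we get $R(\pi^\sharp)\le R(\pi^\star)$ immediately.

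\textbf{Upper bound.} Apply the maximizing property of $\pi^\sharp$ with the comparator $\pi^\star$:
\begin{equation*}
R(\pi^\sharp)-\lambda^\sharp C(\pi^\sharp)\ \ge\ R(\pi^\star)-\lambda^\sharp C(\pi^\star).
\end{equation*}
Rearranging gives
\begin{equation*}
R(\pi^\star)-R(\pi^\sharp)\le \lambda^\sharp\bigl(C(\pi^\star)-C(\pi^\sharp)\bigr)\le \lambda^\sharp C(\pi^\star).
\end{equation*}
The last inequality holds because $C(\pi^\sharp)\ge 0$ (the clipped cost is pointwise nonnegative) and $\lambda^\sharp\ge 0$. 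This proves \cref{eq:price_of_clipping_prelim}.

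\textbf{Specialization to indicator rewards.} Here I bound the two factors separately.
\begin{itemize}
\item The dual update clips $\lambda$ to $[0,\Lambda]$, so $\lambda^\sharp\le\Lambda=\frac{B}{L_{\max}-B}$.
\item For $C(\pi^\star)$, use the pointwise inequality $(L-B)_+\le \frac{L_{\max}-B}{L_{\max}}\,L$ for $0\le L\le L_{\max}$. It is trivial when $L\le B$. When $B<L\le L_{\max}$, the chord from $(B,0)$ to $(L_{\max},L_{\max}-B)$ lies below the line through the origin with that slope. Equivalently, $L-B\le L(1-B/L_{\max})$ reduces to $L\le L_{\max}$.
\item Taking expectations and using feasibility $\EE_{\pi^\star}[L]\le B$ gives $C(\pi^\star)\le \frac{L_{\max}-B}{L_{\max}}$.
\end{itemize}
Multiplying the two bounds, the product is $\frac{B}{L_{\max}-B}\cdot\frac{L_{\max}-B}{L_{\max}}=\frac{B}{L_{\max}}$, which is \cref{eq:price_of_clipping}.

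The only step that needs care is bounding $C(\pi^\star)$ using the mean-length constraint alone. The convex-hull (chord) bound above is the natural device for this, and it is tight for two-point length distributions on $\{0,L_{\max}\}$. Indicator rewards enter only through the choice of $\Lambda$.
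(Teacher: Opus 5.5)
Your proof is correct and follows the paper's argument essentially step for step. The lower bound comes from feasibility of $\pi^\sharp$. The upper bound uses Lagrangian optimality of $\pi^\sharp$ against the comparator $\pi^\star$, then drops $\lambda^\sharp C(\pi^\sharp)\ge 0$. The final bound $B/L_{\max}$ comes from $\lambda^\sharp\le\Lambda$ together with the pointwise inequality $(L-B)_+\le \frac{L_{\max}-B}{L_{\max}}L$ and the mean-length constraint, which is exactly the paper's auxiliary lemma bounding $C(\pi)$.
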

The bound indicates that the suboptimality induced by clipping is governed by two factors: the limiting multiplier
$\lambda^\sharp$, and the extent to which the optimal feasible policy $\pi^\star$ places probability mass above the budget, as captured by $C(\pi^\star)$. 
In the indicator-reward regime, combining the $\lambda$-ceiling with a worst-case tail bound of $\tilde{C}(\pi^*)$  yields the ceiling-based guarantee $B/L_{\max}$.
This bound is conservative and often loose in practice, but it already implies near-optimality in the worst-case sense whenever $L_{\max}\gg B$.

\begin{table*}[t]
\centering
\caption{Evaluation results across four math benchmarks.}
\renewcommand{\arraystretch}{1.2}
\resizebox{\textwidth}{!}{%
\begin{tabular}{lcccccccccc}
\toprule
\multirow{2}{*}{\textbf{Model}} &
\multicolumn{2}{c}{\textbf{AIME2024}} & 
\multicolumn{2}{c}{\textbf{MATH}} & 
\multicolumn{2}{c}{\textbf{Minerva}} & 
\multicolumn{2}{c}{\textbf{Olympiad}} & 
\multicolumn{2}{c}{\textbf{Macro Average}} \\
\cmidrule(lr){2-3} \cmidrule(lr){4-5} \cmidrule(lr){6-7} \cmidrule(lr){8-9} \cmidrule(lr){10-11}
& Pass@1 & \# Tokens & Pass@1 & \# Tokens & Pass@1 & \# Tokens & Pass@1 & \# Tokens & Pass@1 & \# Tokens \\
\midrule
\textrm{DeepScaleR-1.5B} & 
38.75 & 8140 & 
85.92 & 3019 & 
27.62 & 4134 & 
51.94 & 5410 & 
51.06 & 5176 \\
\; \textrm{+ GRPO} & 
29.58 & 6122 &
85.28 & 1767 & 
29.64 & 2630 & 
\textbf{49.11} & 3418 & 
\cellann{48.40}{\RedDown{2.66}} & \cellann{3484}{\GreenDown{33\%}} \\
\; \textrm{+ L1-Exact} & 
22.02 & 4138 &
82.57 & 3734 &
28.74 & 3755 &
44.91 & 3987 &
\cellann{44.56}{\RedDown{6.50}} & \cellann{3904}{\GreenDown{24\%}} \\
\;\;\; \textrm{+ L1-Max} &
25.12 & \textbf{2979} &
83.45 & 1912 &
28.35 & \textbf{1701} &
44.76 & \textbf{2334} &
\cellann{45.42}{\RedDown{5.64}} & \cellann{\textbf{2231}}{\GreenDown{57\%}} \\
\; \textrm{+ Efficient-Reasoning} & 
32.71 & 6440 &
82.01 & 1585 &
27.11 & 1852 &
46.95 & 3230 &
\cellann{47.19}{\RedDown{3.87}} & \cellann{3277}{\GreenDown{37\%}} \\
\; \textrm{+ ThinkPrune-Iter2k} & 
\textbf{35.00} & 4598 &
84.22 & 1802 &
29.24 & 1941 &
48.77 & 2935 &
\cellann{49.71}{\RedDown{1.35}} & \cellann{2819}{\GreenDown{45\%}} \\
\; \textbf{+ \texttt{LACONIC} ($\mathbf{B}$=2000)} & 
34.79 & 3976 & 
\textbf{85.61} & \textbf{1556} & 
\textbf{31.76} & 1869 & 
48.96 & 2448 & 
\cellann{\textbf{50.28}}{\RedDown{0.78}} & \cellann{2462}{\GreenDown{52\%}} \\
\midrule
\textrm{DeepSeek-1.5B} & 
30.00 & 15218 &
82.90 & 5340 & 
29.13 & 6783 & 
44.80 & 12417 &
46.71 & 9940 \\
\; \textrm{+ GRPO} &
\textbf{25.83} & 9090 &
81.57 & 2403 & 
27.20 & 2655 & 
43.52 & 5420 &
\cellann{44.53}{\RedDown{2.18}} & \cellann{4892}{\GreenDown{51\%}} \\
\; \textrm{+ Efficient-Reasoning} &
22.08 & 8801 &
81.63 & 1909 &
27.57 & 2018 & 
42.96 & 4799 & 
\cellann{43.56}{\RedDown{3.15}} & \cellann{4382}{\GreenDown{56\%}} \\
\; \textrm{+ ThinkPrune-Iter2k} &
23.33 & 5440 &
81.35 & 1762 &
27.35 & 1837 &
43.39 & \textbf{3021} &
\cellann{43.85}{\RedDown{2.86}} & \cellann{3015}{\GreenDown{70\%}} \\
\; \textbf{+ \texttt{LACONIC} ($\mathbf{B}$=1500)} &
24.17 & \textbf{5140} &
\textbf{81.88} & \textbf{1636} &
\textbf{28.36} & \textbf{1638} &
\textbf{44.11} & 3241 &
\cellann{\textbf{44.63}}{\RedDown{2.08}} & \cellann{\textbf{2914}}{\GreenDown{71\%}} \\
\bottomrule
\end{tabular}
}
\vspace{0.5em}
\label{tab:main-results}
\end{table*}

\section{Experiment}\label{sec:experiment}

\subsection{Experimental Setup}


\textbf{Models and Datasets.} For the training dataset, we use DeepScaleR-Preview-Dataset \citep{deepscaler2025}, a math dataset containing 40.3K rows of question-answer pairs sampled from AIME (prior to 2023), AMC (prior to 2023), Omni-MATH \citep{gao2024omnimathuniversalolympiadlevel}, and STILL \citep{min2024imitateexploreselfimprovereproduction}. 
For base models, we use DeepScaleR-1.5B-Preview \citep{deepscaler2025} (\textbf{DeepScaleR-1.5B} for short) and DeepSeek-R1-Distill-Qwen-1.5B \cite{deepseekai2025deepseekr1} (\textbf{DeepSeek-1.5B} for short).
DeepSeek-1.5B is a 1.5B-parameter model distilled from Qwen2.5-1.5B \citep{qwen2025qwen25technicalreport}, and DeepScaleR-1.5B is a 1.5B-parameter reasoning model further fine-tuned from DeepSeek-1.5B on DeepScaleR-Preview-Dataset.

\textbf{Baselines.} We fine-tune the base models with the following algorithms to serve as baselines and compare with our algorithm \texttt{LACONIC}: (i) GRPO \citep{shao2024deepseekmathpushinglimitsmathematical}: the standard RL-tuning algorithm originally used in the post-training of DeepSeek-1.5B, DeepScaleR-1.5B, and Qwen math models; (ii) L1-Exact and L1-Max \citep{aggarwal2025l1controllinglongreasoning}: heuristic reward design methods that fine-tune models to satisfy target-length constraints; (iii) Efficient-Reasoning \citep{arora2025traininglanguagemodelsreason}: a length-aware reward design controlled by a fixed penalty coefficient; (iv) ThinkPrune-Iter \citep{hou2025thinkprune}: an iterative method to prune thinking lengths.

\textbf{Training.} We train DeepScaleR-1.5B on \texttt{LACONIC} with $B=2000$ for 300 steps, and DeepSeek-1.5B on \texttt{LACONIC} with $B=1500$ for 500 steps. We set the maximum response length to 4K tokens per prompt during training.

\textbf{Evaluation.} We evaluate models on 4 common mathematics benchmarks: AIME2024, MATH \citep{hendrycksmath2021}, Minerva \citep{NEURIPS2022_18abbeef}, and Olympiad-Bench \citep{he-etal-2024-olympiadbench}. To assess the mathematical reasoning ability of the models, we report pass@1, the fraction of questions for which the model's first response matches the correct answer. To quantify the verbosity of the model's output, we report the average response length. We set the maximum response length to 32K tokens during evaluation.

\begin{table*}[t]
\centering
\caption{Evaluation results across out-of-domain (OOD) benchmarks.}
\renewcommand{\arraystretch}{1.2}
\resizebox{\textwidth}{!}{%
\begin{tabular}{lcccccccc}
\toprule
\multirow{2}{*}{\textbf{Model}} &
\multicolumn{2}{c}{\textbf{GPQA}} & 
\multicolumn{2}{c}{\textbf{LSAT}} & 
\multicolumn{2}{c}{\textbf{MMLU}} & 
\multicolumn{2}{c}{\textbf{Macro Average}} \\
\cmidrule(lr){2-3} \cmidrule(lr){4-5} \cmidrule(lr){6-7} \cmidrule(lr){8-9}
& Accuracy & \# Tokens & Accuracy & \# Tokens & Accuracy & \# Tokens & Accuracy & \# Tokens\\
\midrule
\textrm{DeepScaleR-1.5B} & 
 & & & & & & & \\
\; \textrm{+ GRPO} & 	
\textbf{28.88} & 2229 &
24.6 & 3426 &
43.30 &	719 & 
\textbf{32.26} & 2125 \\
\; \textrm{+ L1-Exact} & 
22.88 & 1475 &
25.19 & \textbf{1374} &
36.67 & 766 &
\cellann{28.25}{\RedDown{4.01}} & \cellann{1205}{\GreenDown{43\%}} \\
\;\;\; \textrm{+ L1-Max} & 
28.72 & 1773 &
\textbf{26.58} & 2321 &
38.48 & 863 &
\cellann{31.26}{\RedDown{1.00}} & \cellann{1652}{\GreenDown{22\%}} \\
\; \textrm{+ ThinkPrune-Iter2k} & 
27.02 & 1791 &
24.82 & 3244 &
42.42 & 611 &
\cellann{31.42}{\RedDown{0.84}} & \cellann{1882}{\GreenDown{11\%}} \\
\; \textrm{+ \textbf{\texttt{LACONIC}}} &
27.20 & \textbf{1167} &
24.53 & 1923 &
\textbf{43.36} & \textbf{497} &
\cellann{31.69}{\RedDown{0.57}} & \cellann{\textbf{1196}}{\GreenDown{44\%}} \\
\bottomrule
\end{tabular}
}
\vspace{0.5em}
\label{tab:OOD-results}
\end{table*}

\begin{figure*}[t]
  \centering
  \begin{subfigure}[t]{0.495\textwidth}
    \centering
    \includegraphics[width=\linewidth]{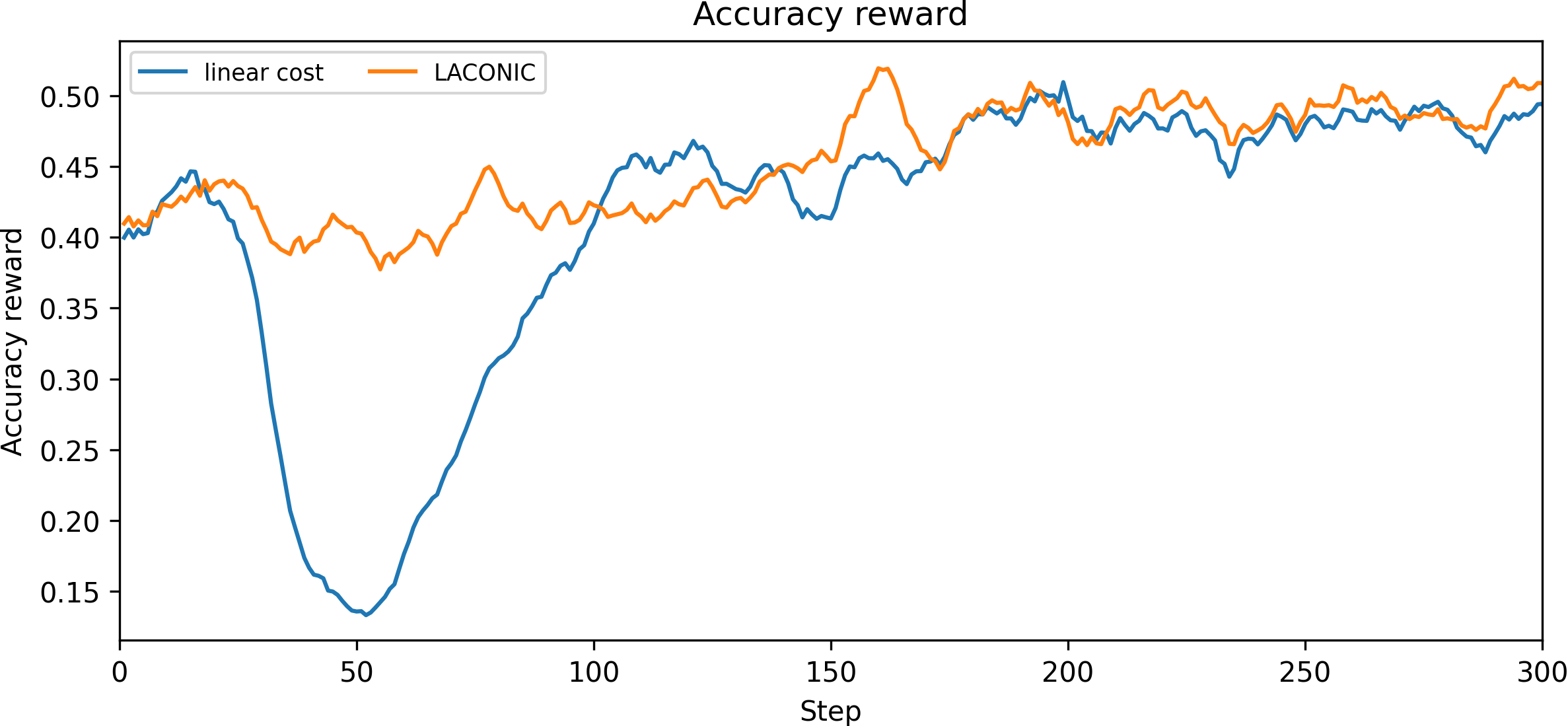}\vspace{-0.1cm}
    \caption{Accuracy reward over training steps}
    \label{fig:reward_accuracy_cost}
  \end{subfigure}\hfill
  \begin{subfigure}[t]{0.495\textwidth}
    \centering
    \includegraphics[width=\linewidth]{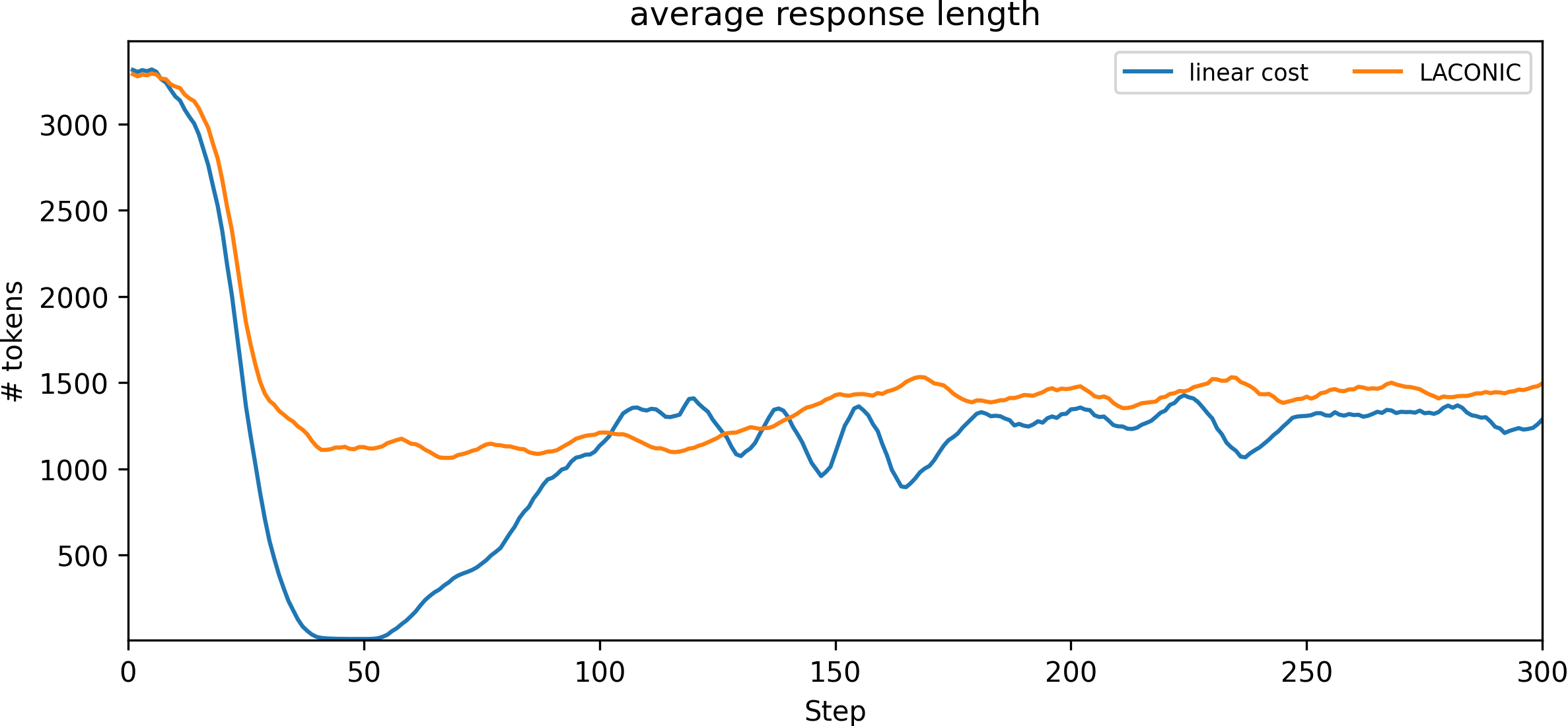}\vspace{-0.1cm}
    \caption{Average response length over training steps}
    \label{fig:mean_response_length_cost}
  \end{subfigure}
  \vspace{-0.1cm}
  \caption{
    Ablation of the cost functions on DeepScaleR-1.5B with token budget $B=1500$. We plot (a) accuracy reward and (b) average response length over training steps. For the linear cost, the Langrangian reward used in primal updates is computed by $r(q,o) - \lambda\,\tilde{c}(q,o)$, where $\tilde{c}(q,o) = (L(o)-B)/B$. All other experiment setups and hyperparameters are identical.
  }
  \label{fig:ablation_cost_main}
\end{figure*}

\subsection{Main Results}
In this section, we first present in \cref{tab:main-results} the main results of all baselines and \texttt{LACONIC} on the 4 mathematics benchmarks. Then we present in \cref{tab:OOD-results} the results on out-of-domain benchmarks, GPQA, LSAT, and MMLU, which probe general knowledge and logic reasoning.

\textbf{\texttt{LACONIC} outperforms existing length-control methods and matches vanilla RL-tuning while significantly reducing response lengths.} On DeepScaleR-1.5B, after fine-tuning, \texttt{LACONIC} achieves 50.28 macro-average pass@1 with 2462 tokens, outperforming all baselines. Existing length-control methods either lose substantially more accuracy, or use noticeably more tokens at similar accuracy. Relative to the full-length base model, \texttt{LACONIC} achieves virtually the same pass@1 while using 52\% fewer tokens.

On DeepSeek-1.5B, \texttt{LACONIC} attains the highest macro-average pass@1 and the lowest token count among all baselines. Relative to the full-length base model, \texttt{LACONIC} reduces response length by 71\% with a modest 2.08 pass@1 decrease.



\textbf{\texttt{LACONIC} preserves out-of-domain (OOD) capabilities.} \texttt{LACONIC} preserves GRPO's macro average accuracy while generating 44\% fewer tokens on average.  Compared with both L1 variants and ThinkPrune-Iter2k, \texttt{LACONIC} attains higher macro pass@1 with fewer tokens. The results shows that \texttt{LACONIC} achieves strong OOD task reward preservation with substantially shorter outputs.




\begin{figure*}[t]
  \centering
  \begin{subfigure}[t]{0.33\textwidth}
    \centering
    \includegraphics[width=\linewidth]{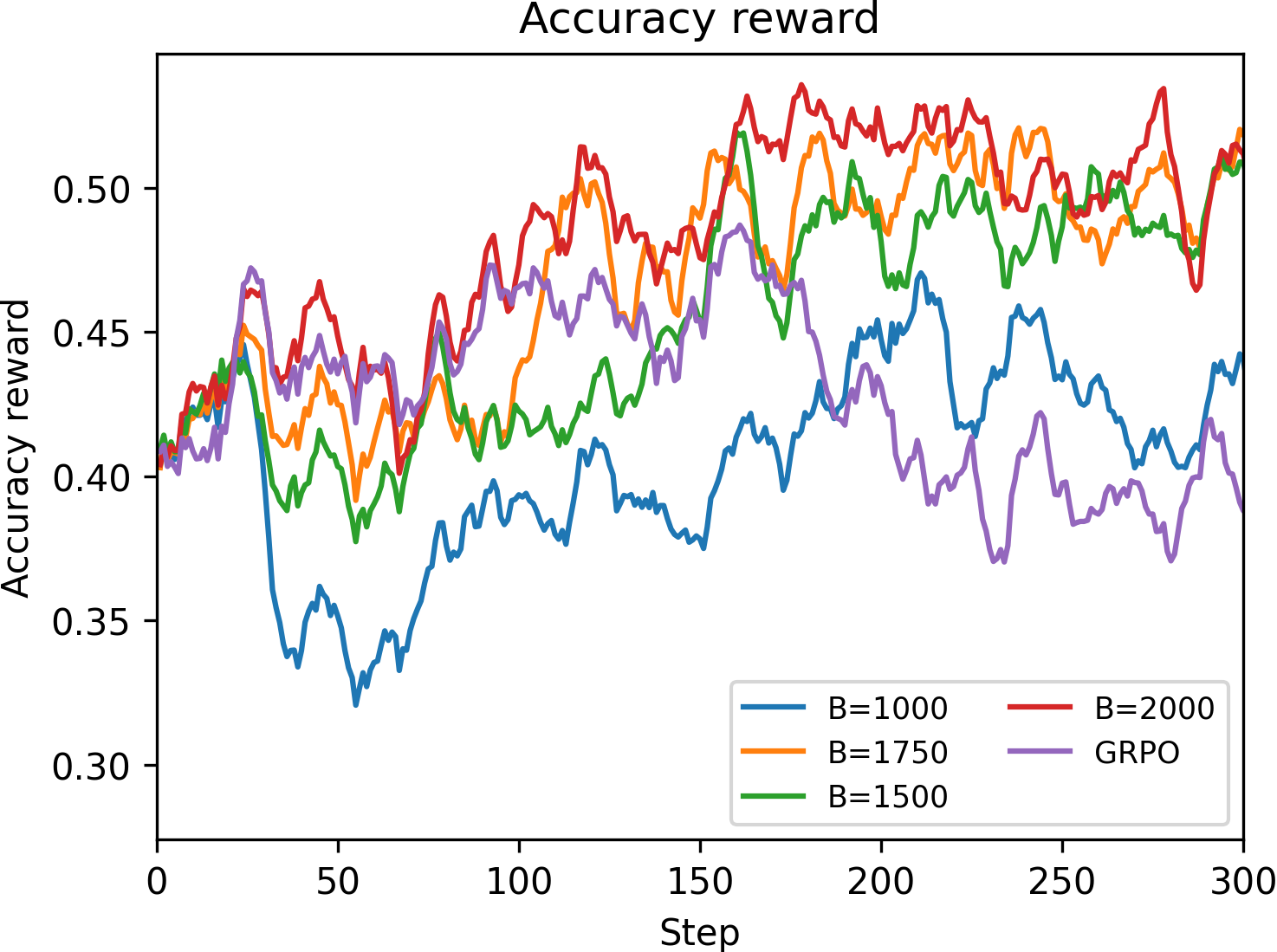}\vspace{-0.1cm}
    \caption{Accuracy reward}
    \label{fig:reward_accuracy_budget}
  \end{subfigure}\hfill
  \begin{subfigure}[t]{0.33\textwidth}
    \centering
    \includegraphics[width=\linewidth]{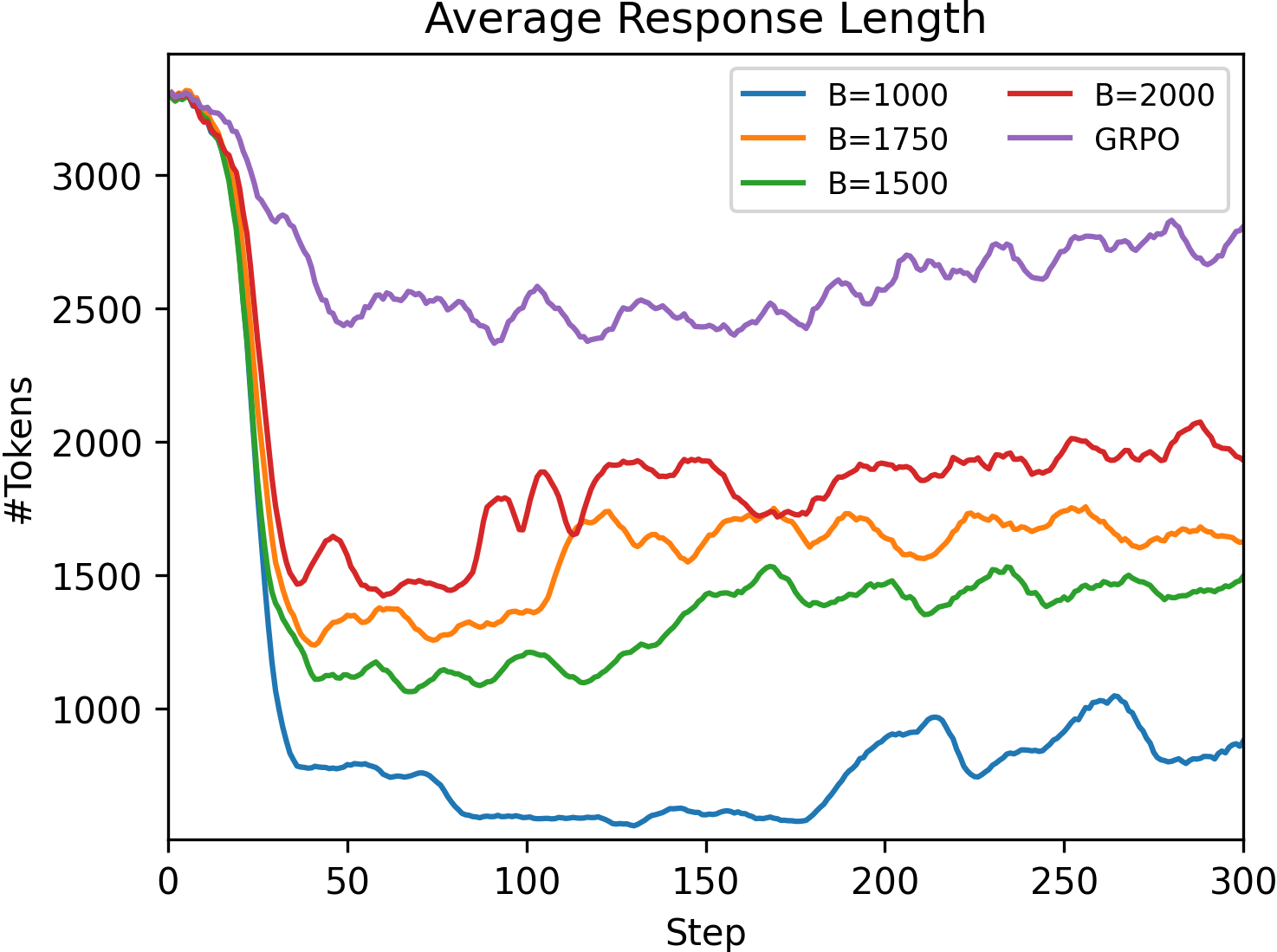}\vspace{-0.1cm}
    \caption{Average response length}
    \label{fig:mean_response_length_budget}
  \end{subfigure}
  \begin{subfigure}[t]{0.33\textwidth}
    \centering
    \includegraphics[width=\linewidth]{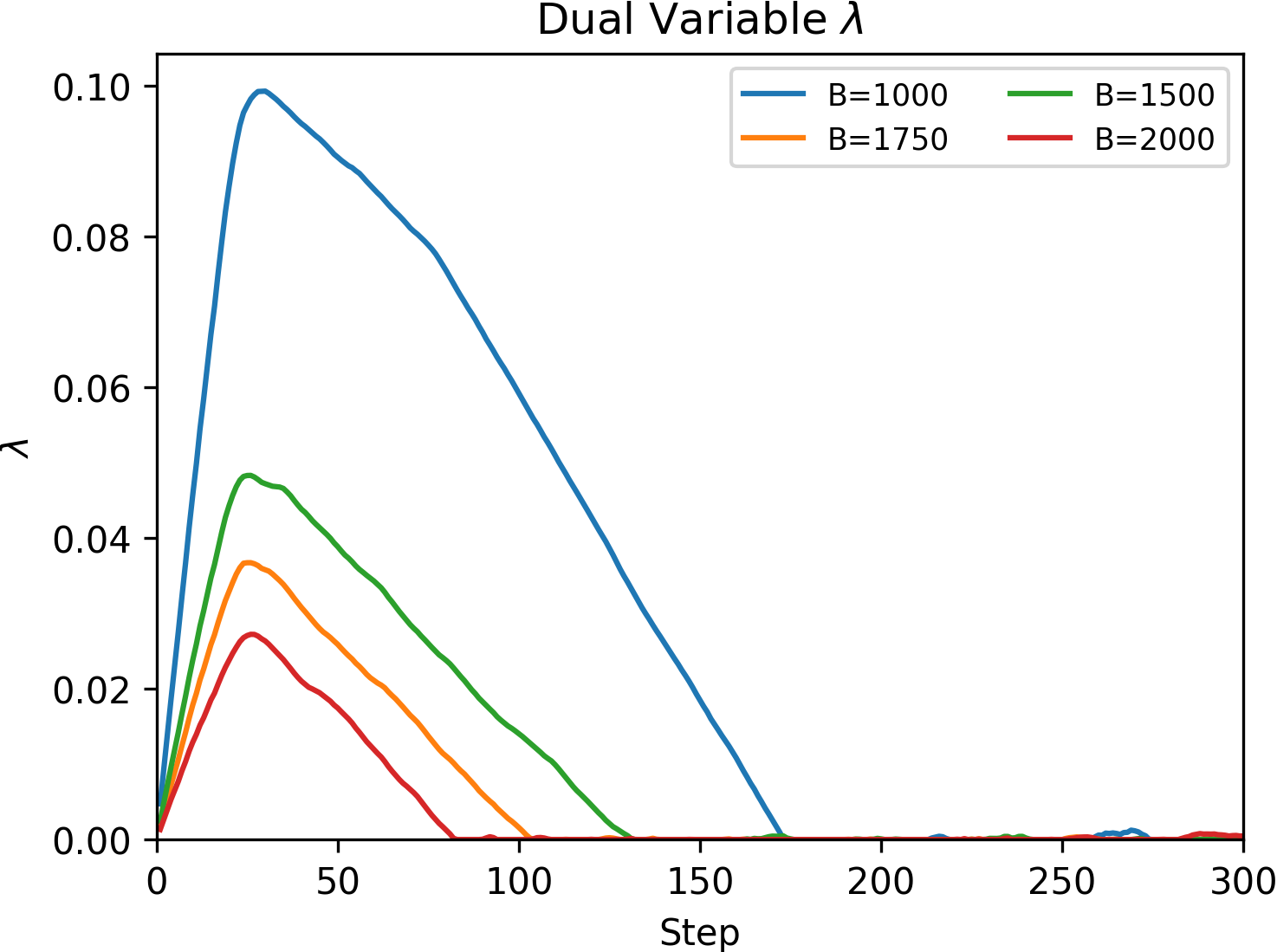}\vspace{-0.1cm}
    \caption{Dual variable $\lambda$}
    \label{fig:lambda_budget}
  \end{subfigure}
  \vspace{-0.1cm}
  \caption{
    Ablation of the token budget $B$ on DeepScaleR-1.5B. We plot (a) accuracy reward; (b) average response length; and (c) dual variable $\lambda$ over training steps with budgets $B\in\{1000,1500,1750,2000\}$. All other setups and hyperparameters are identical.
  }
  \label{fig:ablation_budget_main}
\end{figure*}

\begin{table*}[h]
\centering
\caption{Evaluation results of the token-budget $B$ ablation on DeepScaleR-1.5B across four math benchmarks.}
\renewcommand{\arraystretch}{1.2}
\resizebox{\textwidth}{!}{%
\begin{tabular}{lcccccccccc}
\toprule
\multirow{2}{*}{\textbf{Token Budget}} &
\multicolumn{2}{c}{\textbf{AIME2024}} & 
\multicolumn{2}{c}{\textbf{MATH}} & 
\multicolumn{2}{c}{\textbf{Minerva}} & 
\multicolumn{2}{c}{\textbf{Olympiad}} & 
\multicolumn{2}{c}{\textbf{Macro Average}} \\
\cmidrule(lr){2-3} \cmidrule(lr){4-5} \cmidrule(lr){6-7} \cmidrule(lr){8-9} \cmidrule(lr){10-11}
& Pass@1 & \# Tokens & Pass@1 & \# Tokens & Pass@1 & \# Tokens & Pass@1 & \# Tokens & Pass@1 & \# Tokens \\
\midrule
\textrm{+ \texttt{LACONIC} (2000)} & 
34.79 & 3976 & 
85.61 & 1556 & 
31.76 & 1869 & 
48.96 & 2448 & 
50.28 & 2462 \\
\textrm{+ \texttt{LACONIC} (1750)} &
31.21 & 3803 &
84.82 & 1516 &
31.52 & 2059 &
48.88 & 2261 &
49.11 & 2410 \\
\textrm{+ \texttt{LACONIC} (1500)} & 
29.58 & 3458 &
84.35 & 1353 &
31.48 & 1582 &
47.93 & 2084 &
48.33 & 2119 \\
\textrm{+ \texttt{LACONIC} (1000)} & 
28.12 & 2665 &
83.75 & 1049 &
29.24 & 1189 &
47.07 & 1669 &
47.05 & 1643 \\
\bottomrule
\end{tabular}
}
\vspace{0.5em}
\label{tab:ablation-results}
\end{table*}

\section{Further Analysis}\label{sec:analysis}



In this section, we present additional ablation analysis related to cost functions, hyperparameters, and examine the computational resources required by \texttt{LACONIC}, including runtime, FLOPs, and memory usage.

\subsection{Ablation Analysis on Cost Function}\label{sec:cost_ablation}
As discussed in \cref{sec:formulation}, a primal-dual formulation provides a principled way to enforce a target token budget $B$. However, using a linear cost $\tilde{c}(q,o) = \frac{L(o) - B}{B}$ can induce unstable policy updates in practice. In this section , we isolate this design choice by comparing the linear cost $\tilde{c}$ against the clipped cost $c$ we propose for \texttt{LACONIC}.

We change only the cost used in the GRPO-style primal updates while keeping all other setups identical, and report the ablation results in \cref{fig:ablation_cost_main}. The linear-cost baseline exhibits a sharp length collapse early in training, driving the model to generate extremely short, degenerate outputs. This causes unstable policy updates and a drastic drop in reward. In contrast, \texttt{LACONIC} maintains stable learning dynamics.

After training, \texttt{LACONIC} achieves 48.33 macro pass@1 with 2119 tokens on average, while the linear-cost baseline attains 47.37 macro pass@1 but produces 3281 tokens on average. The detailed results are deferred to \cref{sec:app-cost-ablation}.

\subsection{Ablation Analysis on Budget $B$}
We vary the token budget $B\in \{2000, 1750, 1500, 1000\}$ on DeepScaleR-1.5B while keeping all other settings and hyperparameters (including the dual step size) fixed, and train for 300 steps. \Cref{fig:ablation_budget_main} shows the training dynamics of (a) accuracy reward; (b) average response length; and (c) dual variable $\lambda$. We evaluate the step-300 checkpoints on the four mathematics benchmarks. \Cref{tab:ablation-results} reports pass@1 and average response lengths.

\textbf{\texttt{LACONIC} provides reliable, hyperparameter-tuning-free length control.} Across token budgets, training rapidly drives the average response length under the budget and maintains it near the budget once stabilized. Even under tight constraints on a backbone that naturally produces long responses, \texttt{LACONIC} keeps the average length near the budget. In practice, $B$ acts as a single knob and no re-tuning of other hyperparameters is required to achieve effective length control. 


\textbf{\texttt{LACONIC} achieves better or matching reward with less tokens than existing length control methods across a wide range of token budgets.} We compare the evaluation results of \texttt{LACONIC} with budgets $B$ from 1K to 2K in \cref{tab:ablation-results} to the full-length base model and baselines in \cref{tab:main-results}. Across a wide range of budgets, \texttt{LACONIC} consistently outperforms or matches all baselines while using substantially fewer tokens. This shows the effectiveness of \texttt{LACONIC} to preserve reward under constrained token budgets.




\subsection{Ablation Analysis on Dual Step Size $\eta$}\label{sec:ablation_eta}
We vary the step size for dual updates $\eta \in \{0.001,0.002,0.01\}$ with all other settings fixed and train DeepScaleR-1.5B for with $B=1500$. For $\eta=0.01$, we also set a low $\lambda$-ceiling $\Lambda = 0.1$. \Cref{fig:ablation_eta_main} plots the training dynamics of (a) accuracy reward; (b) average response length; and (c) dual variable $\lambda$. We evaluate the step-300 checkpoints on the 4 mathematics benchmarks, and report the pass@1 and average response lengths in \cref{sec:ablation-results-eta}. 

\textbf{\texttt{LACONIC} is insensitive to the dual step size $\eta$.} Across different dual step sizes, \texttt{LACONIC} delivers consistent length control. With $\eta=0.002$ versus 0.01, the training dynamics are similar after stabilization. All trained checkpoints reach comparable final reward. This shows that \texttt{LACONIC} is robust to an order-of-magnitude change in the step size $\eta$.

\begin{figure}[b]
    \vspace{-0.4cm}
    \centering
    \includegraphics[width=\linewidth]{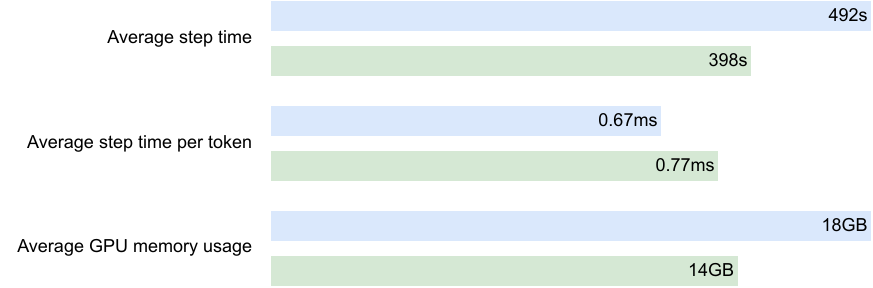}
    \vspace{-0.4cm}
    \caption{Average computational resource usage of \texttt{LACONIC} (green) and GRPO (blue).}
    \label{fig:resource}
    \vspace{-0.3cm}
\end{figure}



\begin{figure*}[t]
  \centering
  \begin{subfigure}[t]{0.33\textwidth}
    \centering
    \includegraphics[width=\linewidth]{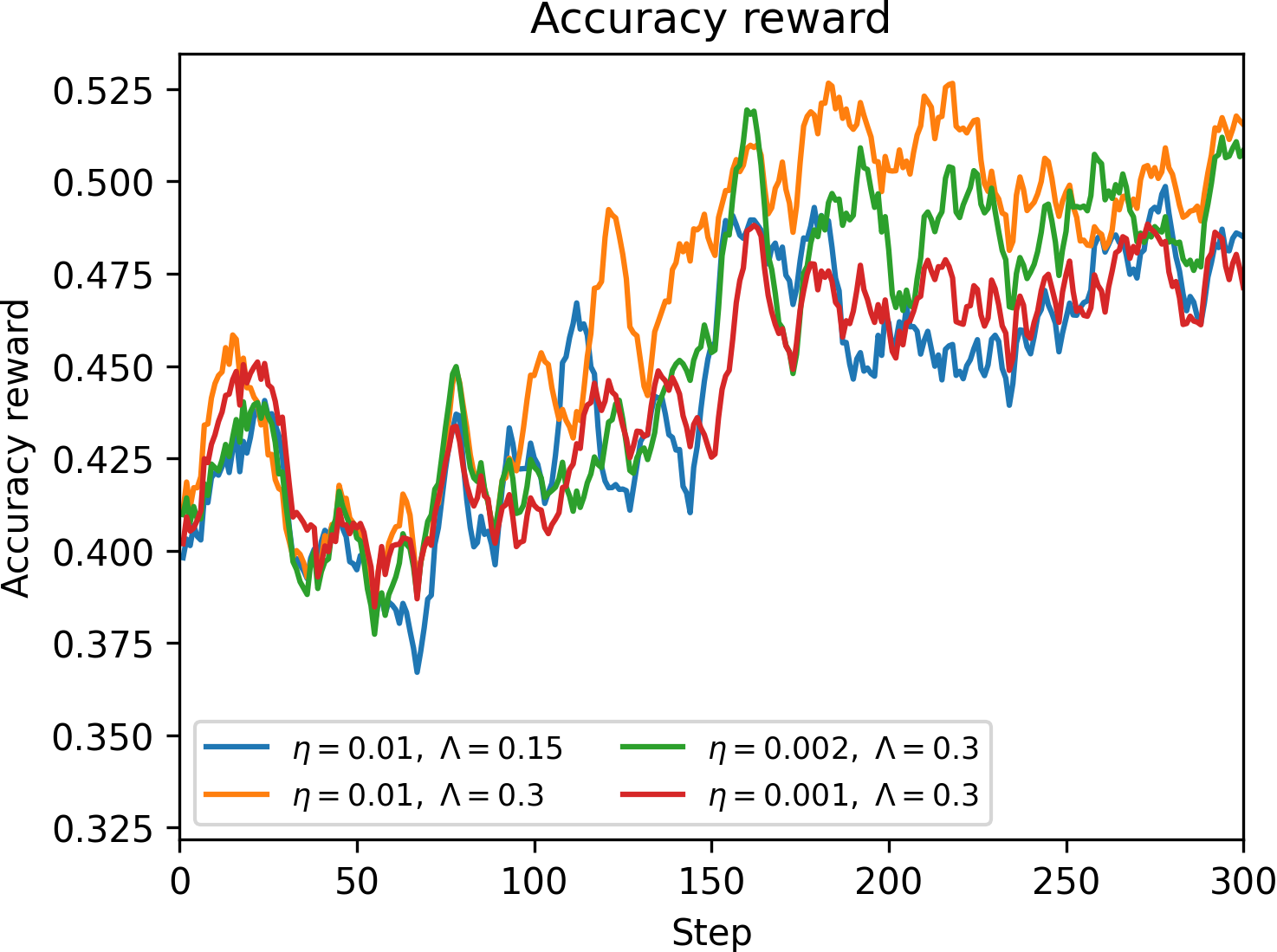}\vspace{-0.1cm}
    \caption{Accuracy reward}
    \label{fig:reward_accuracy_eta}
  \end{subfigure}\hfill
  \begin{subfigure}[t]{0.33\textwidth}
    \centering
    \includegraphics[width=\linewidth]{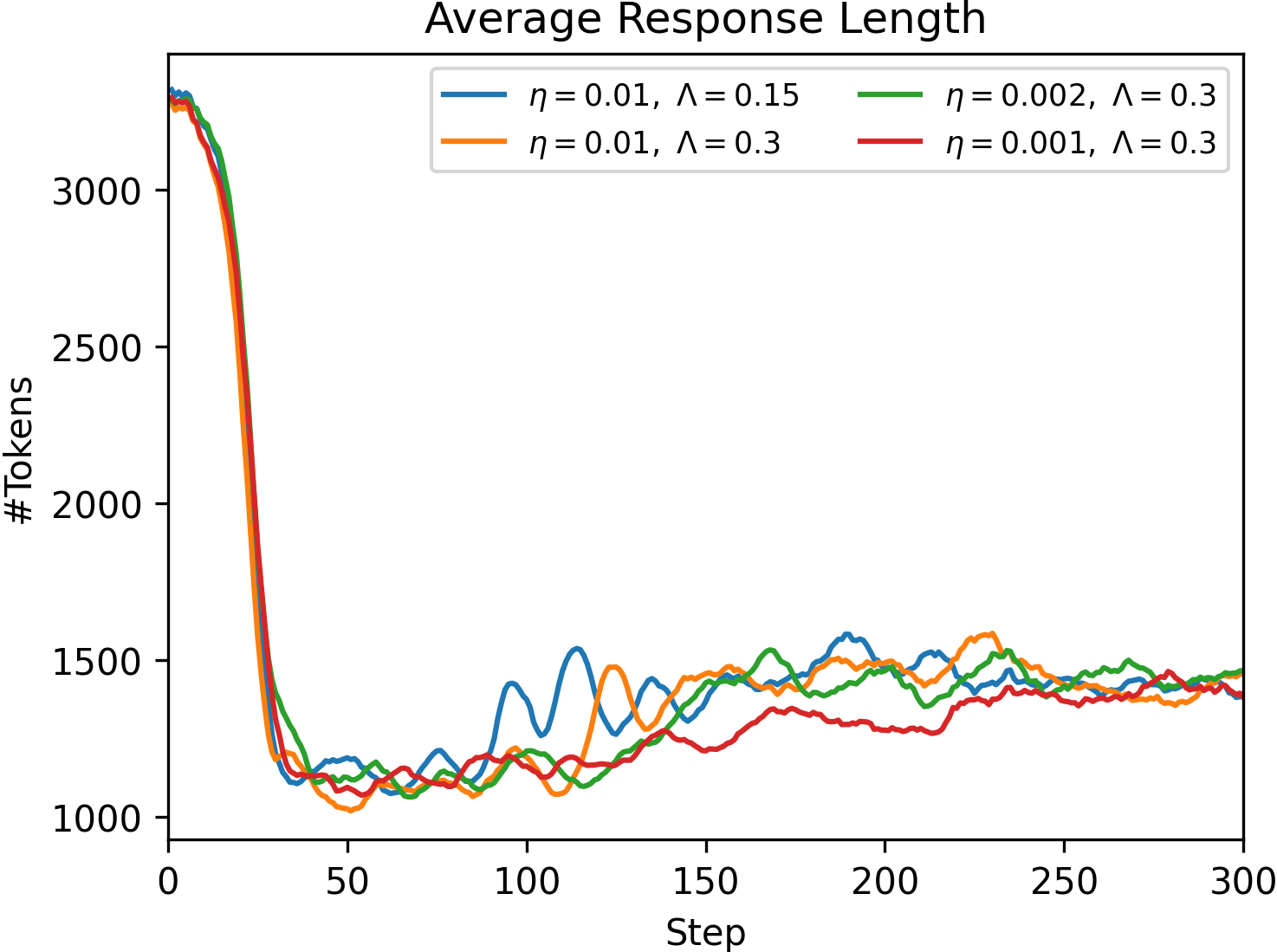}\vspace{-0.1cm}
    \caption{Average response length}
    \label{fig:mean_response_length_eta}
  \end{subfigure}
  \begin{subfigure}[t]{0.33\textwidth}
    \centering
    \includegraphics[width=\linewidth]{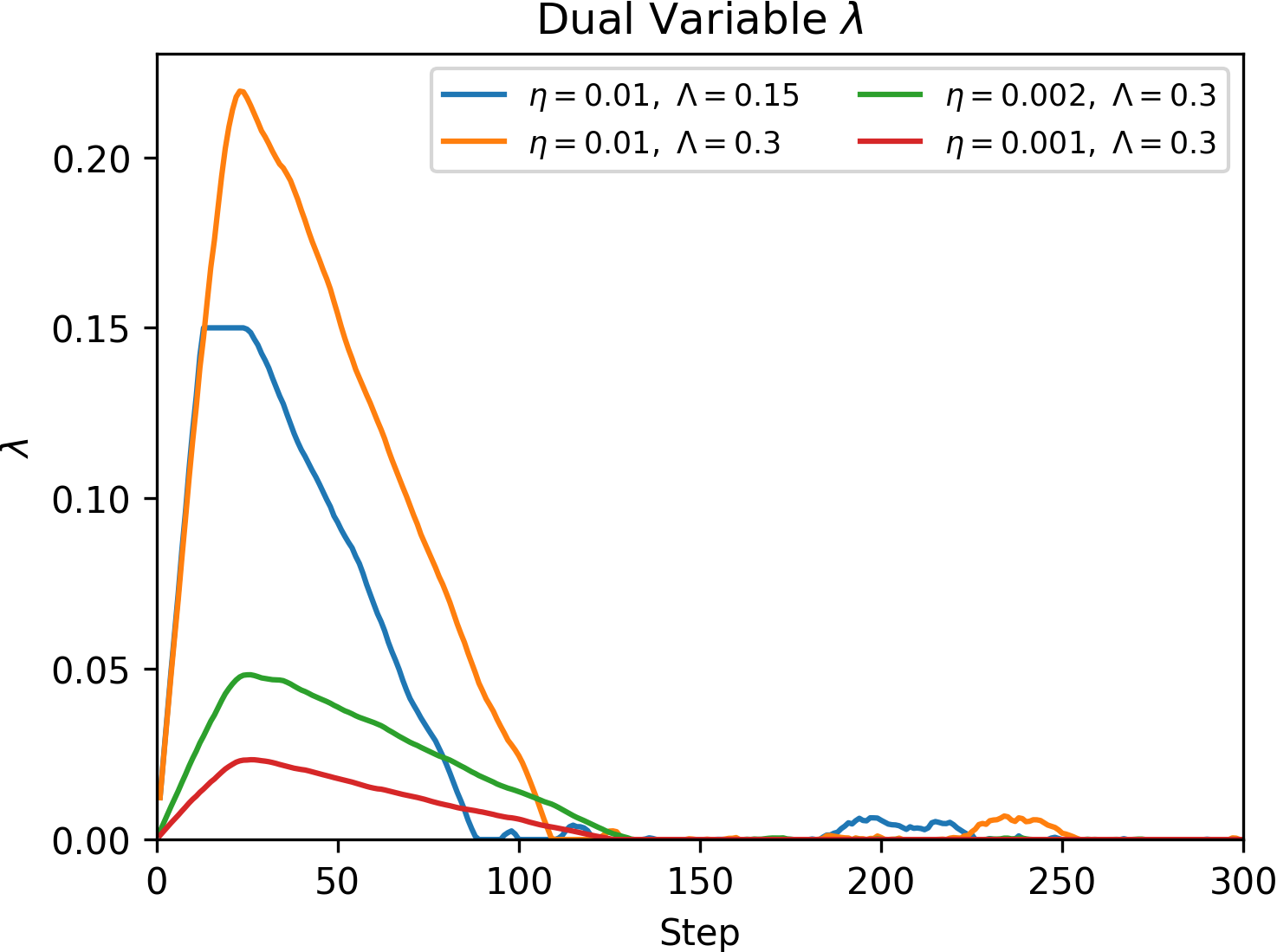}\vspace{-0.1cm}
    \caption{Dual variable $\lambda$}
    \label{fig:lambda_eta}
  \end{subfigure}
  \vspace{-0.1cm}
  \caption{
    Ablation of the dual step size $\eta$ on DeepScaleR-1.5B with $B=1500$. We plot (a) accuracy reward; (b) average response length; and (c) dual variable $\lambda$ over training steps with step size $\eta\in\{0.001,0.002,0.01\}$. We set $\Lambda=0.3$ for all experiments except for the run with $\eta=0.01$ and $\Lambda=0.15$. All other setups and hyperparameters are identical.
  }
  \label{fig:ablation_eta_main}
\end{figure*}





\subsection{Computational Resource Analysis}


We report in \cref{fig:resource} wall-clock step time, step time per token, and NVML GPU memory, averaged over training steps. \texttt{LACONIC} is end-to-end cheaper than vanilla RL-tuning. Our method is 19\% faster and uses 22\% less GPU memory. Per-token cost is nearly unchanged, with a small bookkeeping overhead for the length cost and dual update. Overall, \texttt{LACONIC} adds negligible kernel-level overhead and reduces runtime and memory by generating fewer tokens.

\section{Related Work}

\textbf{RL fine-tuning.}
Reinforcement learning has become a crucial component of LLM post-training, particularly for enhancing large-scale reasoning and aligning model behavior with human preferences or task-specific objectives. Starting from the policy gradient algorithm \citep{Sutton1998},following works \citep{williams1992simple,schulman2017proximalpolicyoptimizationalgorithms} addressed the instability of early methods. Several GRPO-based extensions have been introduced to address specific challenges in LLM training. SRPO \citep{zhang2025srpo} addresses the problem of ineffective samples through history resampling. DAPO \citep{yu2025dapo} introduces dynamic sampling to better handle complex reasoning tasks such as chain-of-thought (CoT) generation. Other variants include VAPO \citep{yue2025vapo}, which adapts advantage estimation to better capture variance across different reasoning depths; GSPO \citep{zheng2025group}, which emphasizes group-level structure in sampling; GFPO \citep{shrivastava2025samplethinklessgroup}, which focuses on sample efficiency in long-horizon settings; GMPO \citep{zhao2025geometricmean}, which explores geometric averaging of policy gradients for improved robustness; and Dr.GRPO \citep{liu2025understandingr1zeroliketrainingcritical}, which proposes updated formulae for computing advantages and the objective for GRPO. Collectively, these methods demonstrate the growing sophistication of RL fine-tuning techniques and the community’s effort to make them more scalable, stable, and effective for large-scale LLM alignment.

\textbf{Length-Aware LLMs.} Recent work has investigated various approaches for making large language models (LLMs) aware of output length, including modifications to positional encoding, prompt engineering techniques, and post-hoc truncation methods \citep{li2025search, wu2025agentic, li2025webthinker}. A common strategy involves incorporating length preferences into reinforcement learning (RL) fine-tuning through manually designed reward functions that penalize or incentivize certain output lengths \citep{aggarwal2025l1controllinglongreasoning, cheng2025incentivizingdualprocessthinking, huang2025hapotraininglanguagemodels, yuan2025efficientrltrainingreasoning,arora2025traininglanguagemodelsreason,hou2025thinkprune}. These methods typically rely on fixed heuristics or penalty terms that remain constant throughout training, and thus optimize a surrogate objective that may be misaligned with the true downstream task reward. This misalignment can lead to suboptimal performance and often requires extensive hyperparameter tuning to balance length control with task-specific quality. Our work differs in that it aims to align length control with task reward in a more adaptive and data-driven manner, avoiding the limitations of fixed shaping objectives.

\textbf{Constrained RL.} Constrained RL is commonly used to formulate and enforce constraints in the environment. Many previous studies \citep{achiam2017constrainedpolicyoptimization,tessler2018rewardconstrainedpolicyoptimization,stooke2020responsivesafetyreinforcementlearning} have proposed policy gradient methods for constrained RL. Recent work including this paper has been introducing the constrained RL framework into RL fine-tuning for LLMs. \cite{tzannetos2025curriculumdesigntrajectoryconstrainedagent} propose a curriculum strategy to compress inference time of LLMs. \cite{zhang2025alignmentlargelanguagemodels}  deploy the primal-dual approach to control the divergence of policy updates.


\section{Conclusion}
We present \texttt{LACONIC}, a primal-dual method for length-aware RL fine-tuning that integrates into standard GRPO with minimal changes. It enforces a user-specified token budget via a clipped cost and an adaptive dual variable for excess length, yielding concise generations while preserving accuracy. We also provide a theoretical analysis with convergence and near-optimality guarantees. Across math and out-of-domain benchmarks, \texttt{LACONIC} consistently reduces output length while preserving pass@1. Budget-only ablations show precise, stable controllability that lengths track the target budget without retuning hyperparameters. Overall, \texttt{LACONIC} makes length control a simple, reliable component of RL-based LLM fine-tuning.


\section{Future Work}

While \texttt{LACONIC} is effective and lightweight, it has several limitations. It currently enforces a global average token budget, which may not capture prompt-specific or context-dependent needs. Our experiments are limited to math reasoning, and future work could validate generality on dialogue, summarization, or code. Finally, \texttt{LACONIC} handles a single constraint, but the framework naturally extends to multi-constraint settings such as latency or safety.



\bibliography{custom}
\bibliographystyle{iclr2026_conference}




\newpage
\appendix
\onecolumn
\crefalias{section}{appendix}
\crefalias{subsection}{appendix}


\section{Standard Primal-Dual Method and Linear Cost Functions}\label{app:ablation_cost}
\subsection{Standard Primal-Dual Updates}
In this section, we derive the standard primal-dual updates.

Recall the constrained optimization problem of length-aware LLMs formulated in \cref{eq:CMDP}:
$$\max_\theta\EE_{q\sim P(Q), o\sim\pi_\theta(\cdot|q)}[r(q,o)], \; \textrm{s.t. }\; \EE_{q\sim P(Q), o\sim\pi_\theta(\cdot|q)}[L(o)] \le B.$$
The corresponding Lagrangian is 
\begin{equation}
    \cL(\theta,\lambda) = \EE_{q\sim P(Q), o\sim\pi_\theta(\cdot|q)}[r(q,o)] - \lambda\, (\frac{\EE_{q\sim P(Q), o\sim\pi_\theta(\cdot|q)}[L(o)]}{B}-1), \quad \lambda\ge 0,
\end{equation}
and the problem is commonly approached by solving the saddle point problem: 
\begin{equation}\label{eq:saddle-point}
    \max_\theta\min_{\lambda\ge 0} \cL(\theta,\lambda),
\end{equation}
where $\theta$ is the primal variable (in our case, the policy model) and $\lambda$ is the dual variable. We assume that \cref{eq:saddle-point} is feasible, and denote the optimal feasible solution by $(\pi^*, \lambda^*)$. The standard primal-dual approach solves $\pi^*$ and $\lambda^*$ iteratively with partial derivatives. In the primal step, the dual variable $\lambda_t$ is fixed, and we optimize $\cL(\theta, \lambda_t)$ over $\theta$:
\begin{equation*}
    \theta_{t+1}\in\argmax_\theta \EE_{q\sim P(Q), o\sim\pi_\theta(\cdot|q)}[r(q,o)] - \lambda_t\, (\frac{\EE_{q\sim P(Q), o\sim\pi_\theta(\cdot|q)}[L(o)]}{B}-1).
\end{equation*}
By linearity of expectations, we have
\begin{equation*}
    \theta_{t+1} \in \argmax_\theta \EE_{q\sim P(Q), o\sim\pi_\theta(\cdot|q)}[r(q,o) - \lambda_t \frac{L(o) - B}{B}].
\end{equation*}
Define the linear cost function 
\begin{equation}\label{eq:linear-cost}
    \tilde{c}(q,o) = \frac{L(o) - B}{B},
\end{equation}
and thus the primal update takes the form
\begin{equation}\label{eq:ideal_standard_primal}
    \theta_{t+1} \in \argmax_\theta \EE_{q\sim P(Q), o\sim\pi_\theta(\cdot|q)}[r(q,o) - \lambda_t \tilde{c}(q,o)].
\end{equation}
In the dual update, we decrease $\cL(\theta_{t+1},\lambda)$ by taking a gradient-descent step for $\lambda$:
\begin{equation}
    \lambda_{t+1} \leftarrow \max\{\lambda_t - \eta \frac{\partial \cL(\theta_{t+1},\lambda)}{\partial \lambda},0\} = \max\{\lambda + \eta (\frac{\EE_{q\sim P(Q), o\sim\pi_{\theta_{t+1}}(\cdot|q)}[L(o)]}{B}-1),0\}.
\end{equation}
Thus far, we have derived the idealized standard primal dual updates in \cref{eq:ideal_primal,eq:ideal_dual}. Empirically we estimate the expectation $\EE_{q\sim P(Q), o\sim\pi_{\theta_t}(\cdot|q)}[L(o)]$ with the minibatch mean $\bar{L}$ generated by the policy model $\pi_\theta$, and with an extra $\lambda$-ceiling $\Lambda$, we derive exactly \cref{eq:dual-update}:
$$\lambda \leftarrow \textrm{clip}(\lambda + \eta(\frac{\bar{L}}{B} - 1), 0, \Lambda).$$

For the primal update in \cref{eq:ideal_standard_primal}, we treat it as a RL-tuning step where the task reward is substituted with a Langrangian reward $\hat{\ell}_{\lambda_t}(q,o)\defeq r(q,o) - \lambda_t \tilde{c}(q,o)$.
We refer to the aforementioned method as the standard primal-dual updates (with linear costs). Note that \texttt{LACONIC} shares the same dual updates as the standard primal-dual method, while changing the cost function used in primal updates.

\subsection{Additional Cost Function Ablations}\label{sec:app-cost-ablation}
As discussed in \cref{sec:formulation,sec:cost_ablation}, using the linear cost function $\tilde{c}(q,o) = \frac{L(o) - B}{B}$ inside the primal (model) update is problematic in practice. Whenever $\lambda>0$, the actor model is incentivized to shorten responses on all samples, and eventually collapses the policy to extremely short outputs. This leads to highly unstable training dynamics, which is undesirable in practice. In \cref{sec:cost_ablation}, we trained DeepScaleR-1.5B using two different methods. We present in \cref{tab:eval_deepscaler_cost_ablation} the detailed evaluation results of the step-300 checkpoints of DeepScaleR-1.5B on these two methods with a token budget $B=1500$. 

\begin{table*}[h]
\centering
\caption{Evaluation results of the cost function ablation on DeepScaleR-1.5B across four math benchmarks.}
\renewcommand{\arraystretch}{1.2}
\resizebox{\textwidth}{!}{%
\begin{tabular}{lcccccccccc}
\toprule
\multirow{2}{*}{\textbf{Methods}} &
\multicolumn{2}{c}{\textbf{AIME2024}} & 
\multicolumn{2}{c}{\textbf{MATH}} & 
\multicolumn{2}{c}{\textbf{Minerva}} & 
\multicolumn{2}{c}{\textbf{Olympiad}} & 
\multicolumn{2}{c}{\textbf{Macro Average}} \\
\cmidrule(lr){2-3} \cmidrule(lr){4-5} \cmidrule(lr){6-7} \cmidrule(lr){8-9} \cmidrule(lr){10-11}
& Pass@1 & \# Tokens & Pass@1 & \# Tokens & Pass@1 & \# Tokens & Pass@1 & \# Tokens & Pass@1 & \# Tokens \\
\midrule
\textrm{DeepScaleR-1.5B} & 
38.75 & 8140 & 
85.92 & 3019 & 
27.62 & 4134 & 
51.94 & 5410 & 
51.06 & 5176 \\
\; \textrm{+ GRPO} & 
29.58 & 6122 &
\textbf{85.28} & 1767 & 
29.64 & 2630 & 
\textbf{49.11} & 3418 & 
\cellann{\textbf{48.40}}{\RedDown{2.66}} & \cellann{3484}{\GreenDown{33\%}} \\
\; \textrm{+ Linear cost (1500)} & 
29.17 & 5636 &
83.00 & 1593 & 
29.14 & 2638 & 
48.18 & 3256 & 
\cellann{47.37}{\RedDown{3.69}} & \cellann{3281}{\GreenDown{37\%}} \\
\; \textbf{+ \texttt{LACONIC} (1500)} & 
\textbf{29.58} & \textbf{3458} &
84.35 & \textbf{1353} &
\textbf{31.48} & \textbf{1582} &
47.93 & \textbf{2084} &
\cellann{48.33}{\RedDown{2.73}} & \cellann{\textbf{2119}}{\GreenDown{59\%}} \\
\bottomrule
\end{tabular}
}
\vspace{0.5em}
\label{tab:eval_deepscaler_cost_ablation}
\end{table*}

The standard primal-dual approach failed to effectively shorten response length on DeepScaleR-1.5B compared to \texttt{LACONIC} or even vanilla GRPO with a restrictive 4K response length cap.

We then include additional ablations of cost functions on another base model, Qwen2.5-Math-1.5B-Instruct \citep{qwen2.5} (\textbf{Qwen-Math-1.5B} for short). Qwen-Math-1.5B is a 1.5B-parameter instruction-tuned math model. In \cref{fig:qwen_math_ablation_cost}, we plot the training dynamics, and in \cref{tab:qwen-math-ablation-cost-results}, we present the evaluation results of the step-350 checkpoints. The experiment shows that with linear cost function $\frac{L(o)-B}{B}$, the average accuracy plummets from 40\% to 8\%, and the mean response lengths fall below 10 tokens, indicating that the model is no longer producing meaningful responses during training. Although as shown in \cref{tab:qwen-math-ablation-cost-results}, the primal-dual framework with linear cost function $\tilde{c}$ recovers the model's performance after the model is stabilized, the unstable update steps can introduce great risk and instability to LLM fine-tuning.

\begin{table*}[t]
\centering
\caption{Evaluation results of the cost functions on Qwen-Math-1.5B across four math benchmarks.}
\renewcommand{\arraystretch}{1.2}
\resizebox{\textwidth}{!}{%
\begin{tabular}{lcccccccccc}
\toprule
\multirow{2}{*}{\textbf{Model}} &
\multicolumn{2}{c}{\textbf{AIME2024}} & 
\multicolumn{2}{c}{\textbf{MATH}} & 
\multicolumn{2}{c}{\textbf{Minerva}} & 
\multicolumn{2}{c}{\textbf{Olympiad}} & 
\multicolumn{2}{c}{\textbf{Macro Average}} \\
\cmidrule(lr){2-3} \cmidrule(lr){4-5} \cmidrule(lr){6-7} \cmidrule(lr){8-9} \cmidrule(lr){10-11}
& Pass@1 & \# Tokens & Pass@1 & \# Tokens & Pass@1 & \# Tokens & Pass@1 & \# Tokens & Pass@1 & \# Tokens \\
\midrule
\textrm{Qwen-Math-1.5B} & 
11.46 & 952 &
\textbf{74.86} & 570 & 
25.14 & 656 & 
\textbf{39.84} & 808 &
\textbf{37.89} & 747 \\
\; \textrm{+ Linear cost (550)} &
11.35 & \textbf{665} &
73.42 & 479 &
\textbf{25.53} & 531 &
38.42 & 627 &
\cellann{37.18}{\RedDown{0.71}} & \cellann{576}{\GreenDown{23\%}} \\
\; \textbf{+ \texttt{LACONIC} (550)} &
\textbf{11.46} & 674 &
73.85 & \textbf{464} &
25.25 & \textbf{524} &
38.39 & \textbf{603} &
\cellann{37.24}{\RedDown{0.65}} & \cellann{\textbf{566}}{\GreenDown{24\%}} \\
\bottomrule
\end{tabular}
}
\vspace{0.5em}
\label{tab:qwen-math-ablation-cost-results}
\end{table*}

\begin{figure}[t]
  \centering
  \begin{subfigure}[t]{0.495\textwidth}
    \centering
    \includegraphics[width=\linewidth]{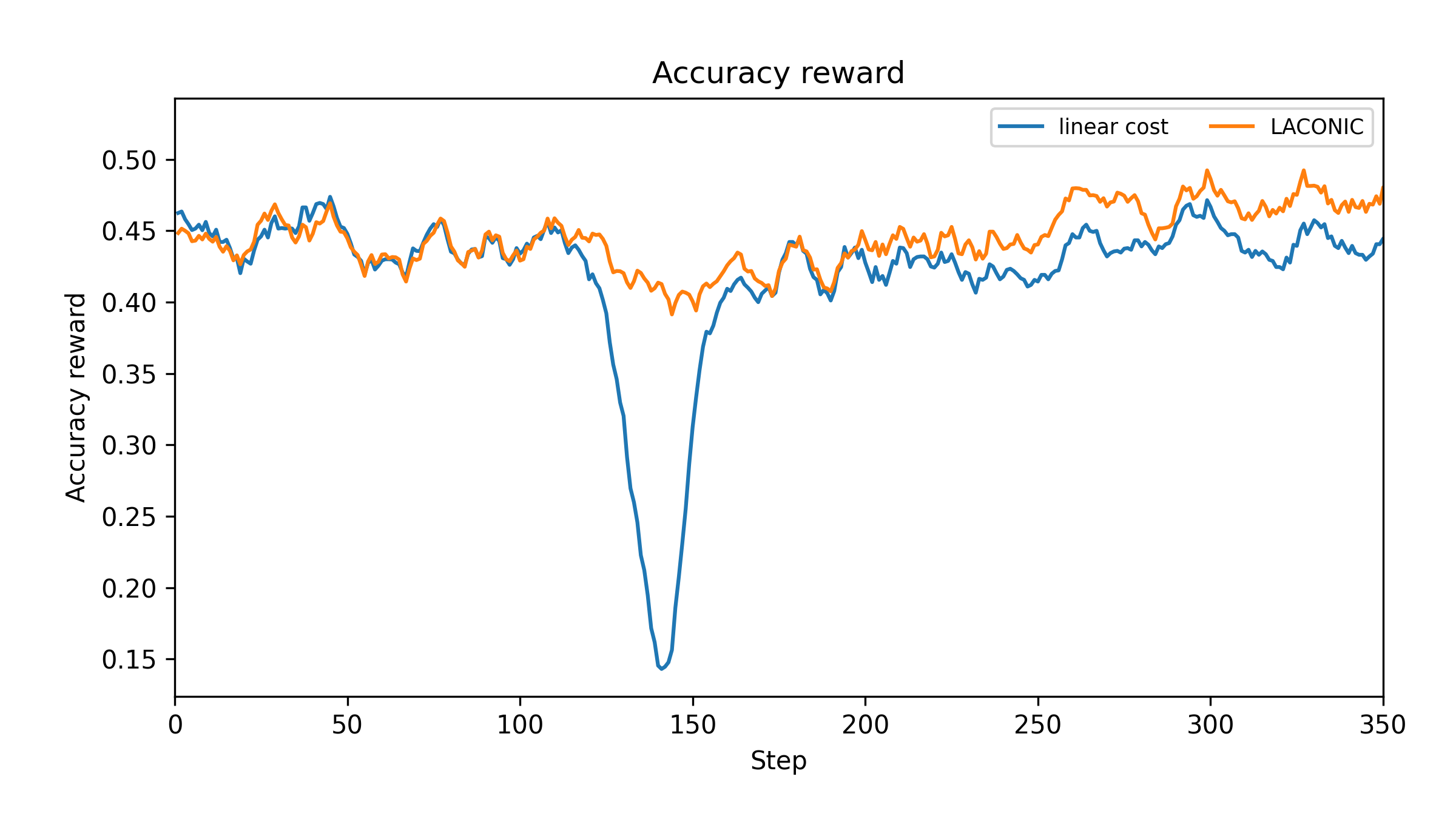}\vspace{-0.3cm}
    \caption{Accuracy reward over training steps}
    \label{fig:reward_accuracy_cost_qwen}
  \end{subfigure}\hfill
  \begin{subfigure}[t]{0.495\textwidth}
    \centering
    \includegraphics[width=\linewidth]{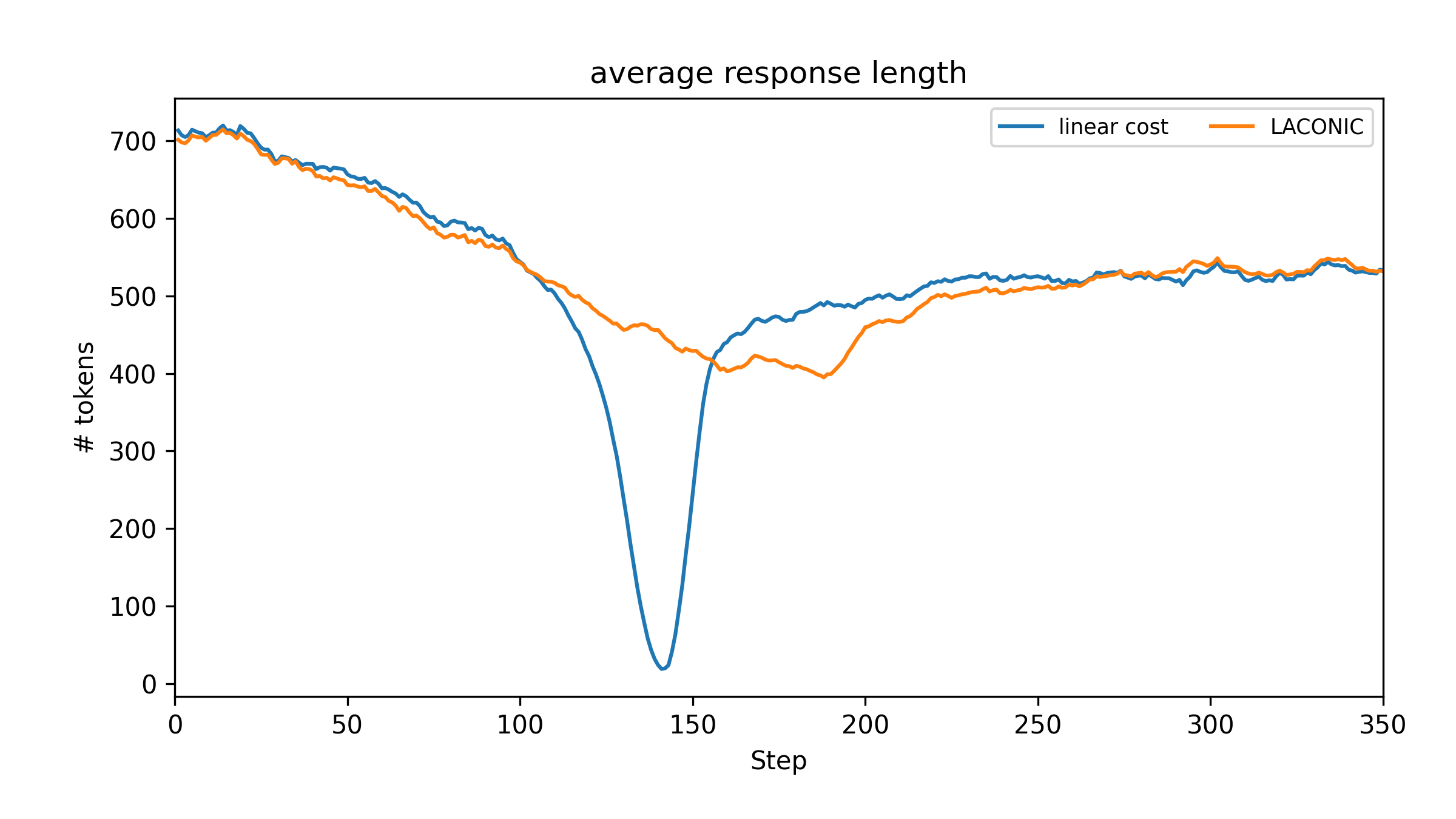}\vspace{-0.3cm}
    \caption{Average response length over training steps}
    \label{fig:mean_response_length_cost_qwen}
  \end{subfigure}
  \vspace{-0.3cm}
  \begin{subfigure}[t]{\textwidth}
    \centering
    \includegraphics[width=0.495\linewidth]{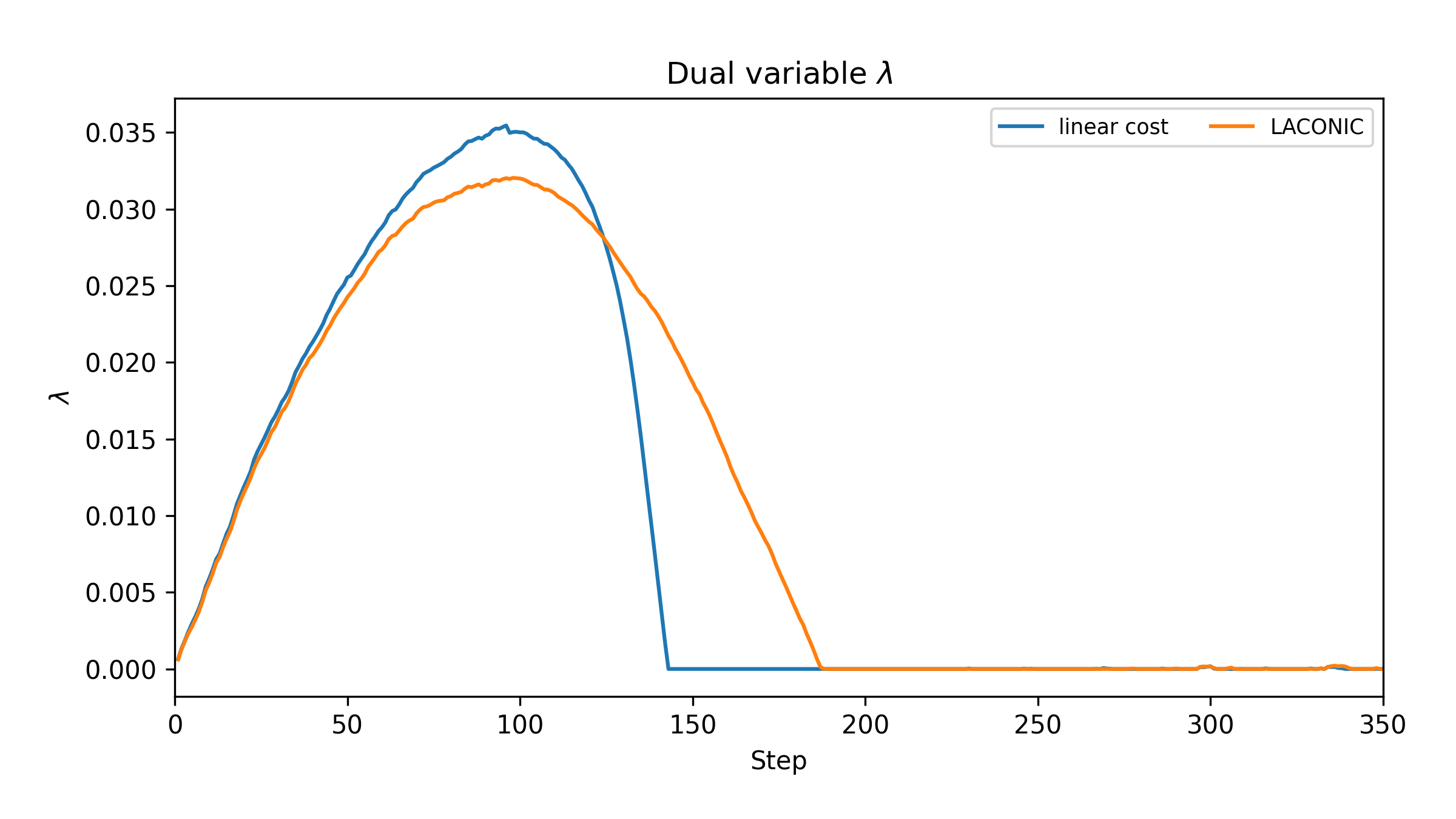}\vspace{-0.3cm}
    \caption{Dual variable \(\lambda\) over training steps}
    \label{fig:lambda_len_cost}
  \end{subfigure}

  \caption{
    Ablation of the cost functions on Qwen-Math-1.5B with a token budget $B=550$:
    (a) reward accuracy,
    (b) average response length,
    (c) dual variable $\lambda$,
  }
  \label{fig:qwen_math_ablation_cost}
\end{figure}

\section{Additional Experiments}
\subsection{Ablation Results on Step Sizes}\label{sec:ablation-results-eta}
In \cref{sec:ablation_eta}, we introduced the ablation experiments on DeepScaleR-1.5B by varying step sizes and $\lambda$-ceiling, and presented training dynamics. In this section, we show the evaluation results of the step-300 checkpoints of all runs.

\begin{table*}[h]
\centering
\caption{Evaluation results of the token-budget $B$ ablation across four math benchmarks.}
\renewcommand{\arraystretch}{1.2}
\resizebox{\textwidth}{!}{%
\begin{tabular}{lcccccccccc}
\toprule
\multirow{2}{*}{\textbf{Step size $\eta$}} &
\multicolumn{2}{c}{\textbf{AIME2024}} & 
\multicolumn{2}{c}{\textbf{MATH}} & 
\multicolumn{2}{c}{\textbf{Minerva}} & 
\multicolumn{2}{c}{\textbf{Olympiad}} & 
\multicolumn{2}{c}{\textbf{Macro Average}} \\
\cmidrule(lr){2-3} \cmidrule(lr){4-5} \cmidrule(lr){6-7} \cmidrule(lr){8-9} \cmidrule(lr){10-11}
& Pass@1 & \# Tokens & Pass@1 & \# Tokens & Pass@1 & \# Tokens & Pass@1 & \# Tokens & Pass@1 & \# Tokens \\
\midrule
\textrm{$\eta=0.001, \Lambda=0.3$} & 
27.50 & 3231 &
84.25 & 1356 &
30.92 & 1543 &
47.87 & 2090 &
47.64 & 2055 \\
\textrm{$\eta=0.002, \Lambda=0.3$} &
29.58 & 3458 &
84.35 & 1353 &
31.48 & 1582 &
47.93 & 2084 &
48.33 & 2119 \\
\textrm{$\eta=0.01, \Lambda=0.3$} & 
29.79 & 3429 &
84.30 & 1365 &
31.42 & 1602 &
47.19 & 2097 &
48.18 & 2123 \\
\textrm{$\eta=0.01, \Lambda=0.15$} & 
29.79 & 3477 &
84.20 & 1319 &
31.15 & 1562 &
47.83 & 2067 &
48.24 & 2106 \\
\bottomrule
\end{tabular}
}
\vspace{0.5em}
\label{tab:ablation-results-eta}
\end{table*}

\subsection{Experiments on Qwen3-8B}
To test the effectiveness of our method on large models, we train Qwen3-8B on \texttt{LACONIC} with a token budget $B=1000$ and 4K response length cap for 150 steps, and we present the evaluation results in \cref{tab:eval_8B}.

The experimental results show that \texttt{LACONIC} reduces response length by 46\% while incurring a modest 2.77\% change in pass@1 compared with vanilla GRPO fine-tuning. Moreover, \texttt{LACONIC} outperforms both L1 variants while using fewer tokens.

\begin{table*}[h]
\centering
\caption{Evaluation results of length-control on Qwen3-8B across four math benchmarks.}
\renewcommand{\arraystretch}{1.2}
\resizebox{\textwidth}{!}{%
\begin{tabular}{lcccccccccc}
\toprule
\multirow{2}{*}{\textbf{Methods}} &
\multicolumn{2}{c}{\textbf{AIME2024}} & 
\multicolumn{2}{c}{\textbf{MATH}} & 
\multicolumn{2}{c}{\textbf{Minerva}} & 
\multicolumn{2}{c}{\textbf{Olympiad}} & 
\multicolumn{2}{c}{\textbf{Macro Average}} \\
\cmidrule(lr){2-3} \cmidrule(lr){4-5} \cmidrule(lr){6-7} \cmidrule(lr){8-9} \cmidrule(lr){10-11}
& Pass@1 & \# Tokens & Pass@1 & \# Tokens & Pass@1 & \# Tokens & Pass@1 & \# Tokens & Pass@1 & \# Tokens \\
\midrule
\textrm{Qwen3-8B} & 
 & & & & & & & & & \\
\; \textrm{+ GRPO} & 
37.50 & 2918 &
87.95 & 1532 & 
40.14 & 1549 & 
54.40 & 2167 & 
55.00 & 2042 \\
\; \textrm{+ L1-Exact} & 
27.92 & 1637 &
86.37 & 987 & 
37.50 & 800 & 
48.54 & 1254 & 
\cellann{50.08}{\RedDown{4.92}} & \cellann{1170}{\GreenDown{43\%}} \\
\;\;\; \textrm{+ L1-Max} & 
30.33 & 1936 &
87.30 & 865 & 
37.59 & 759 & 
51.11 & 1209 & 
\cellann{51.58}{\RedDown{3.42}} & \cellann{1192}{\GreenDown{42\%}} \\
\; \textbf{+ \texttt{LACONIC} (1000)} & 
30.42 & 1643 &
87.01 & 792 & 
38.46 & 783 & 
53.05 & 1166 & 
\cellann{52.23}{\RedDown{2.77}} & \cellann{1096}{\GreenDown{46\%}} \\
\bottomrule
\end{tabular}
}
\vspace{0.5em}
\label{tab:eval_8B}
\end{table*}

\subsection{Ablation Analysis on Fixed Dual Variables}
To isolate the effect of online dual adaptation, we replace \texttt{LACONIC}'s dual update with a fixed $\lambda$ throughout training and sweep $\lambda$ from $\{0.001, 0.01, 0.05\}$. \Cref{tab:ablation-results-fixed-lambda} reports pass@1 and average output length under identical training settings and the same token budget $B=1000$. The sweep reveals strong sensitivity to \(\lambda\): small \(\lambda\) behaves similarly to the base model, yielding limited length reduction, whereas large \(\lambda\) over-emphasizes the length penalty, producing short outputs and clear accuracy degradation. In contrast, \texttt{LACONIC}'s adaptive \(\lambda\) achieves better pass@1 at substantially lower length, without per-task \(\lambda\) tuning, underscoring that online multiplier adaptation is essential for stable and controllable length reduction.

\begin{table*}[h]
\centering
\caption{Evaluation results of fixed $\lambda$ ablation across four math benchmarks.}
\renewcommand{\arraystretch}{1.2}
\resizebox{\textwidth}{!}{%
\begin{tabular}{lcccccccccc}
\toprule
\multirow{2}{*}{\textbf{Model}} &
\multicolumn{2}{c}{\textbf{AIME2024}} & 
\multicolumn{2}{c}{\textbf{MATH}} & 
\multicolumn{2}{c}{\textbf{Minerva}} & 
\multicolumn{2}{c}{\textbf{Olympiad}} & 
\multicolumn{2}{c}{\textbf{Macro Average}} \\
\cmidrule(lr){2-3} \cmidrule(lr){4-5} \cmidrule(lr){6-7} \cmidrule(lr){8-9} \cmidrule(lr){10-11}
& Pass@1 & \# Tokens & Pass@1 & \# Tokens & Pass@1 & \# Tokens & Pass@1 & \# Tokens & Pass@1 & \# Tokens \\
\midrule
\textrm{DeepScaleR-1.5B} & 
38.75 & 8140 & 
85.92 & 3019 & 
27.62 & 4134 & 
51.94 & 5410 & 
51.06 & 5176 \\
\;\textrm{+ GRPO $(\lambda=0)$} & 
29.58 & 6122 &
85.28 & 1767 & 
29.64 & 2630 & 
49.11 & 3418 & 
\cellann{48.40}{\RedDown{2.66}} & \cellann{3484}{\GreenDown{33\%}} \\
\;\textrm{+ $\lambda=0.001$} & 
27.92 & 5834 &
82.73 & 2089 &
29.64 & 2200 &
46.76 & 3663 &
\cellann{46.76}{\RedDown{4.3}} & \cellann{3396}{\GreenDown{34\%}} \\
\;\textrm{+ $\lambda=0.01$} &
26.25 & 4709 &
81.65 & 1927 &
28.90 & 1990 &
45.52 & 2963 &
\cellann{45.58}{\RedDown{5.48}} & \cellann{2897}{\GreenDown{44\%}} \\
\;\textrm{+ $\lambda=0.05$} & 
21.66 & 2417 &
78.28 & 1018 &
24.92 & 1073 &
41.25 & 1602 &
\cellann{41.53}{\RedDown{9.53}} & \cellann{1528}{\GreenDown{70\%}} \\
\;\textbf{+ \texttt{LACONIC}} & 
28.12 & 2665 &
83.75 & 1049 &
29.24 & 1189 &
47.07 & 1669 &
\cellann{47.05}{\RedDown{4.01}} & \cellann{1643}{\GreenDown{68\%}} \\
\bottomrule
\end{tabular}
}
\vspace{0.5em}
\label{tab:ablation-results-fixed-lambda}
\end{table*}

\section{Analysis of clipped-cost primal-dual}
\label{app:clipped-convergence}

We first restate some definitions and notations. Let $q\sim P(Q)$ denote a prompt and $o\sim \pi(\cdot\mid q)$ be a response.
Let $L(o)$ be the length of response $o$ and $B>0$ be the token budget.
Define
\[
R(\pi):=\EE_{q\sim P(Q),o\sim \pi(\cdot|q)}[r(q,o)],\quad
\tilde C(\pi):=\EE_{q\sim P(Q),o\sim \pi(\cdot|q)}\!\left[\frac{L(o)-B}{B}\right],\quad
C(\pi):=\EE_{q\sim P(Q),o\sim \pi(\cdot|q)}\!\left[\frac{(L(o)-B)_+}{B}\right],
\]
where $(x)_+ = \max\{x,0\}$.
The target constrained problem is
\[
\pi^\star \in \argmax_{\pi} R(\pi)\quad \text{s.t.}\quad \tilde C(\pi)\le 0.
\]

\subsection{Idealized clipped-cost dynamics}
For any $\lambda\ge 0$, the optimal policy model under clipped-cost corresponding to $\lambda$ can be determined and defined as 
\begin{equation}
    \pi(\lambda)\in\argmax_{\pi} R(\pi)-\lambda\, C(\pi),
\end{equation}
and then we define the induced true normalized length of the resulting 
\begin{equation}
    \mu(\lambda):=\tilde C(\pi(\lambda)).
\end{equation}
We study the deterministic projected update
\begin{equation}
    \pi_t=\pi(\lambda_t),\quad
    \lambda_{t+1}=\textrm{clip}(\lambda_t + \eta\,\mu(\lambda_t), 0, \Lambda),
\end{equation}
and assume it converges to a limit $(\pi^\sharp,\lambda^\sharp)$ with $\pi^\sharp=\pi(\lambda^\sharp)$.

\subsection{Bounding Reward Suboptimality of Clipped-Cost}

\begin{lemma}\label{lem:upper-bound-of-C(pi)}
    Denote the response length cap by $L_{\max}$, and $0\le L(o)\le L_{\max}$ for any response $o$. We also require $L_{\max}>B$, otherwise the problem is trivial. Let $\pi$ be any feasible policy for the linear constraint $\tilde{C}(\pi)\le 0$, i.e.,
    $\EE_{q\sim P(Q),o\sim \pi(\cdot|q)}[L(o)]\le B$. Then 
    \begin{equation}\label{eq:upper-bound-of-C(pi)}
        C(\pi) = \EE_{q\sim P(Q),o\sim \pi(\cdot|q)}[\frac{(L(o)- B)_+}{B}] \le \frac{L_{\max} - B}{L_{\max}}.
    \end{equation}
\end{lemma}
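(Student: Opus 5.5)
The plan is to use a pointwise convexity (chord) bound on the function $x\mapsto (x-B)_+$ over the interval $[0,L_{\max}]$, and then take expectations so that only the mean length $\EE[L(o)]$ enters. Feasibility of $\pi$ then closes the argument.

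First, I will establish the pointwise inequality
\[
(x-B)_+ \le \frac{L_{\max}-B}{L_{\max}}\, x \qquad \text{for all } x\in[0,L_{\max}].
\]
The function $g(x)=(x-B)_+$ is convex on $[0,L_{\max}]$, with $g(0)=0$ and $g(L_{\max})=L_{\max}-B$. So on this interval $g$ lies below the chord through $(0,0)$ and $(L_{\max},L_{\max}-B)$, and that chord is exactly the line $x\mapsto \frac{L_{\max}-B}{L_{\max}}x$.

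The same inequality can also be checked by cases:
\begin{itemize}
\item For $x\le B$, the left side is $0$ and the right side is nonnegative, since $L_{\max}>B$.
\item For $x\in[B,L_{\max}]$, the difference $\frac{L_{\max}-B}{L_{\max}}x-(x-B)=B\bigl(1-\frac{x}{L_{\max}}\bigr)$ is nonnegative.
\end{itemize}

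Second, I will apply this with $x=L(o)$, which is allowed because $0\le L(o)\le L_{\max}$ almost surely. Taking expectations over $q\sim P(Q)$ and $o\sim\pi(\cdot|q)$ gives
\[
\EE[(L(o)-B)_+]\le \frac{L_{\max}-B}{L_{\max}}\,\EE[L(o)].
\]

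Third, feasibility $\tilde C(\pi)\le 0$ means $\EE[L(o)]\le B$. Substituting this and dividing by $B>0$ yields
\[
C(\pi)\le \frac{L_{\max}-B}{L_{\max}},
\]
which is \cref{eq:upper-bound-of-C(pi)}.

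There is no real obstacle. The only step needing care is the chord bound, specifically choosing the chord anchored at $0$ rather than at $B$. That choice is what lets the bound depend only on the mean length $\EE[L(o)]$, so that the feasibility constraint can be used directly.
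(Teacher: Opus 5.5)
Your proof is correct and follows the paper's argument essentially verbatim: establish the pointwise bound $(L(o)-B)_+\le \frac{L_{\max}-B}{L_{\max}}L(o)$ by cases, take expectations, and invoke feasibility $\EE[L(o)]\le B$ before dividing by $B$. Your chord/convexity remark is an alternative justification of that pointwise bound, and your explicit computation $B(1-x/L_{\max})\ge 0$ spells out the ``arithmetic rearranging'' the paper leaves implicit.
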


\begin{proof}
    We first notice that for any response $o$, 
    \begin{equation}\label{eq:inequality}
        (L(o)-B)_+\le \frac{L_{\max} - B}{L_{\max}}L(o).
    \end{equation}
    When $L(o)< B$, then $\textrm{LHS}<0<\textrm{RHS}$. When $B\le L(o) \le L_{\max}$, then 
    $$(L(o)-B)_+ = L(o) - B \le \frac{L_{\max} - B}{L_{\max}}L(o),$$
    where the inequality can be verified by arithmetic rearranging.
    
    Then by taking expecations on both sides of \cref{eq:inequality}, we have
    $$\EE_{q\sim P(Q),o\sim \pi(\cdot|q)}[(L(o) - B)_+]\le \frac{L_{\max} - B}{L_{\max}}\EE_{q\sim P(Q),o\sim \pi(\cdot|q)}[L(o)].$$
    Since $\EE_{q\sim P(Q),o\sim \pi(\cdot|q)}[L(o)]\le B$ for the feasible policy $\pi$, we conclude that 
    $$C(\pi) = \EE_{q\sim P(Q),o\sim \pi(\cdot|q)}[\frac{(L(o)- B)_+}{B}] \le \frac{L_{\max} - B}{L_{\max}}.$$
\end{proof}

\begin{theorem}[Restate of \cref{thm:price_of_clipping}.] Let $\pi^\star\in\max_{\pi:\tilde{C}(\pi)\le 0} R(\pi)$ be an optimal feasible policy of the length-constrained problem in \cref{eq:CMDP}. Let $(\pi^\sharp,\lambda^\sharp)$ be the feasible limit of the idealized clipped-cost primal-dual updates in \cref{eq:ideal_dyn_primal,eq:ideal_dyn_dual}. Then
\begin{equation}\label{eq:price_of_clipping_prelim_restate}
0 \le R(\pi^\star)-R(\pi^\sharp)\le \lambda^\sharp C(\pi^*).
\end{equation}
Moreover, for indicator rewards with the $\lambda$-ceiling $\Lambda = \frac{B}{L_{\max} - B}$ and a maximum length cap $L(o)\le L_{\max}$, we have
\begin{equation}\label{eq:price_of_clipping_restate}
0 \le R(\pi^\star)-R(\pi^\sharp)\le \frac{B}{L_{\max}}.
\end{equation}
\end{theorem}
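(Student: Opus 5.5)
The plan is to compare $\pi^\sharp$ and $\pi^\star$ through the clipped Lagrangian that defines $\pi^\sharp$, using only the fact that $\pi^\sharp=\pi(\lambda^\sharp)$ maximizes $R(\pi)-\lambda^\sharp C(\pi)$, together with feasibility of both policies.

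\textbf{Lower bound.} By hypothesis $(\pi^\sharp,\lambda^\sharp)$ is a \emph{feasible} limit, so $\tilde C(\pi^\sharp)\le 0$. Since $\pi^\star$ maximizes $R$ over all feasible policies, $R(\pi^\sharp)\le R(\pi^\star)$, which gives $0\le R(\pi^\star)-R(\pi^\sharp)$.

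\textbf{Upper bound.} I use the optimality of $\pi^\sharp$ in \cref{eq:ideal_dyn_primal} at $\lambda=\lambda^\sharp$, tested against the competitor $\pi^\star$:
\[
R(\pi^\sharp)-\lambda^\sharp C(\pi^\sharp)\ \ge\ R(\pi^\star)-\lambda^\sharp C(\pi^\star).
\]
Rearranging gives $R(\pi^\star)-R(\pi^\sharp)\le \lambda^\sharp C(\pi^\star)-\lambda^\sharp C(\pi^\sharp)$. Since $C(\pi^\sharp)\ge 0$ (it is an expectation of a positive part) and $\lambda^\sharp\ge 0$, the last term can be dropped. This yields \cref{eq:price_of_clipping_prelim_restate}. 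No stationarity or complementary-slackness property of $\lambda^\sharp$ is needed; the projection only guarantees $\lambda^\sharp\in[0,\Lambda]$.

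\textbf{Indicator-reward case.} Because of the clip in the dual update, $\lambda^\sharp\le\Lambda=\frac{B}{L_{\max}-B}$. Because $\pi^\star$ is feasible, \cref{lem:upper-bound-of-C(pi)} gives $C(\pi^\star)\le \frac{L_{\max}-B}{L_{\max}}$. Multiplying these two bounds,
\[
\lambda^\sharp C(\pi^\star)\le \frac{B}{L_{\max}-B}\cdot\frac{L_{\max}-B}{L_{\max}}=\frac{B}{L_{\max}},
\]
which is \cref{eq:price_of_clipping_restate}. The indicator-reward assumption matters only because it motivates this particular ceiling $\Lambda$; the algebra itself does not use it.

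The only delicate point is the lower bound, which depends entirely on the limit being feasible. That is an assumption of the statement, not something derived here, so I will state explicitly that it is where the hypothesis enters.
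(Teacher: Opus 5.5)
Your proposal is correct and follows the paper's proof essentially step for step: feasibility of $\pi^\sharp$ gives the lower bound, testing the clipped-Lagrangian optimality of $\pi^\sharp$ against $\pi^\star$ and dropping $-\lambda^\sharp C(\pi^\sharp)\le 0$ gives the upper bound, and combining $\lambda^\sharp\le\Lambda$ with \cref{lem:upper-bound-of-C(pi)} gives $B/L_{\max}$. Your remark that the indicator-reward assumption only motivates the choice of $\Lambda$ and is not used in the algebra is also accurate.
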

\begin{proof}
    Since $\tilde{C}(\pi^\sharp)\le 0$, $\pi^\sharp$ is feasible for the clipped-cost primal-dual optimization, and therefore feasible for the original constrained optimization, by optimality of $\pi^*$,
    $$R(\pi^*) \ge R(\pi^\sharp),$$
    which gives the left inequality.

    For the upper bound, the optimal feasible policy $\pi^\sharp$ implies that for any policy $\pi$,
    $$R(\pi^\sharp) - \lambda^\sharp C(\pi^\sharp) \ge R(\pi) - \lambda^\sharp C(\pi).$$
    Let $\pi$ be $\pi^*$ and rearranging,
    $$R(\pi^*) - R(\pi^\sharp)\le \lambda^\sharp C(\pi^*) - \lambda^\sharp C(\pi^\sharp).$$
    Since $C(\pi^\sharp) \ge 0$, dropping the negative term, we conclude that 
    $$R(\pi^*) - R(\pi^\sharp) \le \lambda^\sharp C(\pi^*).$$
    Thus, we have proved \cref{eq:price_of_clipping_prelim_restate}. Finally, since 
    $$\lambda^\sharp\le \Lambda \le \frac{B}{L_{\max} - B},$$
    and since $\pi^*$ is feasible, we apply \cref{lem:upper-bound-of-C(pi)},
    and obtain \cref{eq:price_of_clipping_restate}:
    $$R(\pi^*) - R(\pi^\sharp) \le \lambda^\sharp C(\pi^*) \le \frac{B}{L_{\max} - B}\frac{L_{\max} - B}{L_{\max}} = \frac{B}{L_{\max}}.$$
\end{proof}

\subsection{Convergence Guarantee}
We impose following assumptions on the scalar response function $\mu(\lambda) = \tilde{C}(\pi(\lambda))$.

\textbf{A1.} For any $\lambda\in [0,\Lambda]$, $\pi(\lambda)$ exists.

\textbf{A2 (Monotonicity of $\mu$).} The response function $\mu$ is nonincreasing and $\mu(\Lambda)\le 0$. 

The mononicity assumption states that as the dual variable $\lambda$ increases, the corresponding optimal policy that maximizes $R(\pi)-\lambda C(\pi)$ will not generate longer responses in expectation. Also, the budget can be enforced without needing $\lambda>\Lambda$.

\textbf{A3 (Continuity of $\mu$).} The response function $\mu$ is continuous and $\gamma$-lipschitz on $[0,\Lambda]$, i.e., for any $\lambda,\lambda'\in [0,\Lambda]$:
$$\abs{\mu(\lambda) - \mu(\lambda')} \le \gamma \abs{\lambda - \lambda'}.$$

\begin{lemma}[Fixed-point feasibility]\label[lemma]{lem:fixed_point}
    Let $\lambda^\sharp\in [0,\Lambda]$ be the fixed point of the dual update
    $$\lambda^\sharp = \textrm{clip}(\lambda^\sharp + \eta\, \mu(\lambda^\sharp), 0, \Lambda).$$
    Then $\mu(\lambda^\sharp)\le 0$, i.e., $\tilde{C}(\pi(\lambda^\sharp)) \le 0$. Hence, $\pi^\sharp = \pi(\lambda^\sharp)$ is feasible.
\end{lemma}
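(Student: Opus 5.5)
The plan is a direct case analysis on where the fixed point $\lambda^\sharp$ sits in $[0,\Lambda]$. Throughout, the step size $\eta>0$, and we use the elementary description of the projection: $\textrm{clip}(x,0,\Lambda)=x$ if $x\in[0,\Lambda]$, $=0$ if $x<0$, and $=\Lambda$ if $x>\Lambda$. Write $x^\sharp \defeq \lambda^\sharp+\eta\,\mu(\lambda^\sharp)$, so the fixed-point equation reads $\lambda^\sharp=\textrm{clip}(x^\sharp,0,\Lambda)$.

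\textbf{Case $0<\lambda^\sharp<\Lambda$ (interior).} The clip cannot have projected onto an endpoint, since its output lies strictly inside $(0,\Lambda)$. Hence $x^\sharp=\lambda^\sharp$, which gives $\eta\,\mu(\lambda^\sharp)=0$. Since $\eta>0$, this means $\mu(\lambda^\sharp)=0\le 0$.

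\textbf{Case $\lambda^\sharp=0$.} Here $\textrm{clip}(x^\sharp,0,\Lambda)=0$, which forces $x^\sharp\le 0$. So $\eta\,\mu(0)\le 0$, and dividing by $\eta>0$ gives $\mu(\lambda^\sharp)\le 0$.

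\textbf{Case $\lambda^\sharp=\Lambda$.} This is the only case where the fixed-point equation alone gives no sign information. Indeed, $\textrm{clip}(\Lambda+\eta\mu(\Lambda),0,\Lambda)=\Lambda$ holds whenever $\mu(\Lambda)\ge 0$. I therefore invoke assumption \textbf{A2}, which states directly that $\mu(\Lambda)\le 0$. I expect this to be the main (though mild) obstacle. Without A2 the ceiling could trap $\lambda$ at a value where the budget is still violated, so it is important to flag that feasibility at the ceiling comes from A2 and not from the dynamics. A1 ensures $\pi(\lambda)$, and hence $\mu(\lambda)$, is well defined throughout.

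\textbf{Conclusion.} Combining the three cases, $\mu(\lambda^\sharp)=\tilde C(\pi(\lambda^\sharp))\le 0$. This means $\EE_{\pi^\sharp}[L(o)]\le B$, so $\pi^\sharp=\pi(\lambda^\sharp)$ is feasible for \cref{eq:CMDP}. This is exactly the feasibility hypothesis used in the proof of \cref{thm:price_of_clipping}.
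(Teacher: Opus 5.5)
Your proposal is correct and follows the paper's proof essentially verbatim: the same three-case split on $0<\lambda^\sharp<\Lambda$, $\lambda^\sharp=0$, and $\lambda^\sharp=\Lambda$. In particular, you handle the ceiling case by invoking assumption A2 ($\mu(\Lambda)\le 0$) exactly as the paper does, since the fixed-point equation alone gives no sign information there.
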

\begin{proof}
    We discuss the value of $\lambda^\sharp$: (i) if $0<\lambda^\sharp<\Lambda$, then it implies $\eta\,\mu(\lambda^\sharp) = 0$, and thus $\mu(\lambda^\sharp) = 0$; (ii) if $\lambda^\sharp = 0$, then $\eta\,\mu(\lambda^\sharp) \le 0$, and thus $\mu(\lambda^\sharp)\le 0$; (iii) if $\lambda^\sharp = \Lambda$, then by assumption A2, we have $\mu(\Lambda) \le 0$.
\end{proof}

\begin{theorem}[Convergence]\label{thm:convergence}
    Let the step size $0< \eta\le 1/\gamma$. Then for any initialization $\lambda_0\in [0,\Lambda]$, the sequence $\lambda_{t+1} = \textrm{clip}(\lambda_t + \eta\, \mu(\lambda_t), 0, \Lambda)$ converges to a fixed point $\lambda^\sharp\in [0,\Lambda]$. Moreover, the corresponding policy $\pi^\sharp = \pi(\lambda^\sharp)$ is feasible, i.e., $\tilde{C}(\pi^\sharp)\le 0$.
\end{theorem}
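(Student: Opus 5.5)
The plan is to study the scalar map $T(\lambda) := \textrm{clip}(\lambda + \eta\,\mu(\lambda), 0, \Lambda)$ on $[0,\Lambda]$ and show that the iterates $\lambda_{t+1}=T(\lambda_t)$ form a monotone bounded sequence. First I would show that $T$ is nondecreasing under the step-size condition $0<\eta\le 1/\gamma$. For $\lambda\le\lambda'$ in $[0,\Lambda]$, the Lipschitz bound (A3) together with monotonicity (A2) gives $0 \le \mu(\lambda)-\mu(\lambda') \le \gamma(\lambda'-\lambda)$. Hence
\[
(\lambda'+\eta\mu(\lambda')) - (\lambda+\eta\mu(\lambda)) \ge (1-\eta\gamma)(\lambda'-\lambda) \ge 0,
\]
so the map $g(\lambda)=\lambda+\eta\mu(\lambda)$ is nondecreasing. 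Composition with the nondecreasing clip preserves this, so $T$ is nondecreasing and maps $[0,\Lambda]$ into itself. $T$ is also continuous, since $\mu$ is continuous.

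Next I would use the standard fact that iterating a nondecreasing self-map of an interval gives a monotone orbit. If $\lambda_1 \ge \lambda_0$, then induction gives $\lambda_{t+1}=T(\lambda_t)\ge T(\lambda_{t-1})=\lambda_t$, so the sequence is nondecreasing. If $\lambda_1\le\lambda_0$, the same argument shows it is nonincreasing. In either case the sequence is monotone and bounded in $[0,\Lambda]$, so it converges to some $\lambda^\sharp\in[0,\Lambda]$. Passing to the limit in $\lambda_{t+1}=T(\lambda_t)$ and using continuity of $T$ yields $\lambda^\sharp=T(\lambda^\sharp)$, so $\lambda^\sharp$ is a fixed point. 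Feasibility then follows directly from \cref{lem:fixed_point}: $\mu(\lambda^\sharp)\le 0$, that is, $\tilde C(\pi^\sharp)\le 0$ with $\pi^\sharp=\pi(\lambda^\sharp)$, and A1 guarantees that $\pi(\lambda^\sharp)$ exists.

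The main obstacle is the monotonicity of $T$. This is exactly where $\eta\le 1/\gamma$ is needed: without it, $g$ could decrease and the iterates could oscillate around the root of $\mu$ instead of converging. The remaining steps are routine. One should note that "converges" here is convergence of $\lambda_t$. The convergence of $\pi_t$ is only in the sense that $\pi^\sharp=\pi(\lambda^\sharp)$ is the policy associated with the limit, since $\pi(\cdot)$ may be set-valued and nothing stronger is claimed.
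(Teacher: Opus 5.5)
Your proof is correct, but it follows a different route from the paper's. The paper recasts the dual recursion as convex optimization. It sets $\Phi(\lambda)=\int_0^\lambda\mu(s)\,ds$, which is concave because $\mu$ is nonincreasing (A2) and $\gamma$-smooth by A3. The update is then projected gradient descent $\lambda_{t+1}=\textrm{clip}(\lambda_t-\eta\nabla f(\lambda_t),0,\Lambda)$ on the convex function $f=-\Phi$. It invokes standard Fej\'er-monotonicity and iterate-convergence results for projected gradient with step $\eta\le 1/\gamma$ to get $|\lambda_{t+1}-\lambda^\sharp|\le|\lambda_t-\lambda^\sharp|$ and $\lambda_t\to\lambda^\sharp$, and closes with \cref{lem:fixed_point} exactly as you do.

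You instead use the total order of $\mathbb{R}$. The condition $\eta\le1/\gamma$ makes $T$ a continuous nondecreasing self-map of $[0,\Lambda]$, so each orbit is monotone and bounded, hence convergent, and continuity makes the limit a fixed point. Your argument is elementary and self-contained, and in fact slightly more general. The key inequality $g(\lambda')-g(\lambda)\ge(1-\eta\gamma)(\lambda'-\lambda)$ uses only the Lipschitz bound, so A2 is needed only for feasibility (the case $\lambda^\sharp=\Lambda$ in \cref{lem:fixed_point}), not for convergence. It also shows that the iterates approach $\lambda^\sharp$ monotonically, without overshooting.

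The paper's route buys portability instead. The first-order-methods machinery extends to vector-valued multipliers when the response map is the gradient of a concave potential, which is relevant for the multi-constraint extension mentioned in the conclusion. It would also give an $O(1/t)$ rate on the potential. Your monotone-orbit argument, by contrast, relies on comparability of $\lambda_1$ and $\lambda_0$, which is special to one dimension.

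One small caveat on your closing remark: under A2, convergence actually persists for $1/\gamma<\eta<2/\gamma$, where the iterates may overshoot but still converge. So $\eta\le 1/\gamma$ is what your monotonicity argument needs, not what convergence itself needs.
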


\begin{proof}
    We turn the problem into convex optimization. First, define the potential 
    $$\Phi(\lambda) = \int_0^\lambda \mu(s)ds.$$
    By the assumption A1 that $\mu$ is nonincreasing, $\Phi$ is concave, and by the assumption A3, $\Phi$ is $\gamma$-smooth. Then we define a convex objective 
    $$f(\lambda) = -\Phi(\lambda).$$
    We then have 
    $$\nabla f(\lambda) = -\mu(\lambda),$$
    and the dual updates can be written as
    $$\lambda_{t+1} = \textrm{clip}(\lambda_t - \eta\nabla f(\lambda_t), 0, \Lambda).$$
    By Fejer monotonicity (Theorem 10.23 and 10.24 in \cite{doi:10.1137/1.9781611974997}), we claim that the distance between the iterates $\lambda_t$ and the optimal solution (i.e., the fixed point) $\lambda^\sharp$ is nonincreasing:
    $$\norm{\lambda_{t+1} - \lambda^\sharp}^2\le \norm{\lambda_{t} - \lambda^\sharp}^2,$$
    and $\lambda_t$ converges to the fixed point $\lambda^\sharp$. Finally, by \cref{lem:fixed_point}, the corresponding policy $\pi^\sharp = \pi(\lambda^\sharp)$ is feasible.
\end{proof}

\subsection{Convergence Rate}
For a clean linear convergence rate, we impose an additional assumption on $\mu$.

\textbf{A4 (Strong monotonicity).} There exists $\xi>0$ such that for all $\lambda\ge \lambda'\in [0,\Lambda]$, $\mu(\lambda) - \mu(\lambda') \le -\xi (\lambda - \lambda')$.

Under the assumptions A1-A4, there exists a unique $\lambda^*\in [0, \Lambda]$ such that $\mu(\lambda^*) = 0$.

\begin{theorem}[Convergence rate]\label{thm:convergence-rate}
    Let the step size $0<\eta\le \min\{1/\gamma, 1/\xi\}$. Then the dual iterates $\lambda_t$ satisfy
    \begin{equation}\label{eq:contraction}
        \abs{\lambda_t - \lambda^\sharp} \le (1-\eta \xi)^t\abs{\lambda_0 - \lambda^\sharp}.
    \end{equation}
    Moreover, the constraint violation decays
    \begin{equation}\label{eq:convergence-rate}
        \abs{\mu(\lambda_t)} \le \gamma \abs{\lambda_t - \lambda^\sharp} \le \gamma(1-\eta\xi)^t \abs{\lambda_0 - \lambda^\sharp},
    \end{equation}
    which implies that $\tilde{C}(\pi_t) = \mu(\lambda_t)$ converges to 0 at a geometric rate.
\end{theorem}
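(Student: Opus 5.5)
The plan is to show that the projected dual map $T(\lambda) := \textrm{clip}(\lambda + \eta\,\mu(\lambda), 0, \Lambda)$ is a contraction on $[0,\Lambda]$ with modulus $1-\eta\xi$. The geometric rate in \cref{eq:contraction} then follows by induction, using the fixed-point identity $\lambda^\sharp = T(\lambda^\sharp)$. The fixed point $\lambda^\sharp$ exists and, by \cref{thm:convergence}, is the limit of the iterates.

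\textbf{Step 1: the unprojected map contracts.} Fix $\lambda \ge \lambda'$ in $[0,\Lambda]$, write $\Delta = \lambda - \lambda' \ge 0$, and let $g(\lambda) = \lambda + \eta\mu(\lambda)$. Then
\[
g(\lambda) - g(\lambda') = \Delta + \eta\bigl(\mu(\lambda) - \mu(\lambda')\bigr).
\]
By A4, $\mu(\lambda) - \mu(\lambda') \le -\xi\Delta$, so $g(\lambda) - g(\lambda') \le (1-\eta\xi)\Delta$. By A3, $\mu(\lambda) - \mu(\lambda') \ge -\gamma\Delta$, so $g(\lambda) - g(\lambda') \ge (1-\eta\gamma)\Delta \ge 0$, using $\eta \le 1/\gamma$. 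Combining the two, $\abs{g(\lambda) - g(\lambda')} \le (1-\eta\xi)\abs{\lambda - \lambda'}$. The condition $\eta \le 1/\xi$ guarantees $1-\eta\xi \ge 0$. Note also that $\xi \le \gamma$ follows automatically from A3 and A4, so the step-size condition is consistent.

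\textbf{Step 2: projection preserves the bound.} The clip onto $[0,\Lambda]$ is the Euclidean projection onto a convex set, hence $1$-Lipschitz. Therefore $\abs{T(\lambda) - T(\lambda')} \le (1-\eta\xi)\abs{\lambda - \lambda'}$. Taking $\lambda' = \lambda^\sharp$ and iterating gives \cref{eq:contraction}. Strong monotonicity also shows that $\lambda^\sharp$ is the unique fixed point of $T$. It coincides with the $\lambda^*$ mentioned before the theorem when $\mu(\lambda^*) = 0$ is attained in the interior; otherwise \cref{lem:fixed_point} still yields $\mu(\lambda^\sharp) \le 0$.

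\textbf{Step 3: violation bound, the main obstacle.} For \cref{eq:convergence-rate}, I would use A3 to write
\[
\abs{\mu(\lambda_t)} \le \abs{\mu(\lambda_t) - \mu(\lambda^\sharp)} + \abs{\mu(\lambda^\sharp)} \le \gamma\abs{\lambda_t - \lambda^\sharp} + \abs{\mu(\lambda^\sharp)}.
\]
The bound as stated requires $\mu(\lambda^\sharp) = 0$. This holds in the interior case of \cref{lem:fixed_point}, and it holds whenever the root $\lambda^*$ of $\mu$ lies in $[0,\Lambda]$: at an endpoint fixed point the clip condition together with monotonicity forces $\lambda^\sharp = \lambda^*$.

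The delicate part is therefore to argue that $\lambda^\sharp = \lambda^*$ with $\mu(\lambda^\sharp) = 0$. I would invoke the existence claim stated just before the theorem, that under A1--A4 a root $\lambda^* \in [0,\Lambda]$ exists. Since $T(\lambda^*) = \lambda^*$, uniqueness of the fixed point gives $\lambda^\sharp = \lambda^*$, and the inequality closes.

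If one only has $\mu(\lambda^\sharp) \le 0$ at the boundary $\lambda^\sharp = 0$, I would instead state the result as a bound on the positive part of the violation. In that case $\mu(\lambda_t) \le \mu(\lambda^\sharp) + \gamma\abs{\lambda_t - \lambda^\sharp} \le \gamma\abs{\lambda_t - \lambda^\sharp}$, which still gives geometric decay of the constraint violation $(\tilde C(\pi_t))_+$.
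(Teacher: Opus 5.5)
Your proposal is essentially the paper's proof: show that the projected map $T$ contracts with modulus $1-\eta\xi$, using non-expansiveness of the clip and A4, telescope to get the geometric bound on $\abs{\lambda_t-\lambda^\sharp}$, then apply A3 together with $\mu(\lambda^\sharp)=0$ for the violation bound. It is in fact more careful than the paper in two places. First, the paper passes to the absolute value without the lower bound $g(\lambda)-g(\lambda')\ge(1-\eta\gamma)\Delta\ge 0$ that you supply, and that is where $\eta\le 1/\gamma$ is actually used. Second, you are right not to take $\mu(\lambda^\sharp)=0$ for granted: a root in $[0,\Lambda]$ exists only if $\mu(0)\ge 0$, which A1--A4 do not imply, so the existence claim you invoke does not hold in general. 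When $\mu(0)<0$, the fixed point is $\lambda^\sharp=0$, the stated bound on $\abs{\mu(\lambda_t)}$ genuinely fails, and only your positive-part version survives.
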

\begin{proof}
    We denote the dual update by a mapping operator $T(\lambda)$:
    $$T(\lambda) = \textrm{clip}(\lambda + \eta \,\mu(\lambda),0,\Lambda).$$
    Since $\lambda^\sharp\in [0,\Lambda]$ and $\mu(\lambda^\sharp) = 0$,
    $$T(\lambda^\sharp) = \lambda^\sharp.$$
    Since the clipping function is non-expansive, we have
    \begin{equation}\label{eq:int-eq-in-conv-rate}
        \abs{T(\lambda) - T(\lambda^\sharp)}\le \abs{\lambda - \lambda^\sharp + \eta (\mu(\lambda) - \mu(\lambda^\sharp))} = \abs{\lambda - \lambda^\sharp + \eta \,\mu(\lambda)}.
    \end{equation}
    Then by the assumption A5, if $\lambda \ge \lambda^\sharp$,
    $$\mu(\lambda)\le \mu(\lambda^\sharp) - \xi(\lambda - \lambda^\sharp) = -\xi(\lambda - \lambda^\sharp).$$
    Therefore, 
    $$\lambda - \lambda^\sharp + \eta\,\mu(\lambda)\le (\lambda - \lambda^\sharp) -\eta\xi(\lambda - \lambda^\sharp) = (1-\eta\xi)(\lambda - \lambda^\sharp),$$
    and 
    $$\abs{\lambda - \lambda^\sharp + \eta\,\mu(\lambda)}\le (1-\eta\xi)(\lambda - \lambda^\sharp).$$
    The same bound follows for $\lambda\le \lambda^\sharp$. Plugging back in \cref{eq:int-eq-in-conv-rate}, we have
    $$\abs{T(\lambda) - T(\lambda^\sharp)} \le (1-\eta\xi)(\lambda - \lambda^\sharp).$$
    Let $\lambda = \lambda_t$, and note that $T(\lambda_t) = \lambda_{t+1}$, we obtain the contraction of the dual update operator $T$:
    $$\abs{\lambda_{t+1} - \lambda^\sharp}\le (1-\eta\xi)\abs{\lambda_t - \lambda^\sharp}.$$
    Telescoping gives \cref{eq:contraction}. Finally, notice that $\mu(\lambda^\sharp) = 0$, we have 
    $$\abs{\mu(\lambda_t)} = \abs{\mu(\lambda_t) - \mu(\lambda^\sharp)}\le \gamma \abs{\lambda_{t+1} - \lambda^\sharp},$$
    and plugging in \cref{eq:contraction} gives us \cref{eq:convergence-rate}.
\end{proof}

\section{Case Study}

Prompt: Every morning Aya goes for a $9$-kilometer-long walk and stops at a coffee shop afterwards. When she walks at a constant speed of $s$ kilometers per hour, the walk takes her 4 hours, including $t$ minutes spent in the coffee shop. When she walks $s+2$ kilometers per hour, the walk takes her 2 hours and 24 minutes, including $t$ minutes spent in the coffee shop. Suppose Aya walks at $s+\frac{1}{2}$ kilometers per hour. Find the number of minutes the walk takes her, including the $t$ minutes spent in the coffee shop. You FIRST think about the reasoning process as an internal monologue and then provide the final answer. The reasoning process MUST BE enclosed within <think> </think> tags. The final answer MUST BE put in \textbackslash boxed\{\}.

Ground truth answer: $204$.

\begin{tcolorbox}[modelbox={\texttt{LACONIC} Output (\#Tokens = 855)}]
\begin{flushleft}
Alright, let's tackle this problem step by step.

First, let's understand what's given. Aya walks a total distance of 9 kilometers. The time she takes depends on her walking speed. She stops at a coffee shop, and the time spent there is \(t\) minutes.

We are given two scenarios:

1. When her speed is \(s\) km/h, the total time (walking plus coffee) is 4 hours.  
2. When her speed increases to \(s+2\) km/h, the total time becomes 2 hours and 24 minutes, which is \(2 + \frac{24}{60} = 2.4\) hours.

We need to find the total time when her speed is \(s + 0.5\) km/h.

Let's denote the time spent in the coffee shop as \(t\) minutes. Since the coffee shop time is the same in both scenarios, we can set up equations based on the total time.

In the first scenario:  
Total time \(=\) walking time \(+\) \(t\) minutes.  
Walking time \(=\) distance / speed \(= 9 / s\) hours.  
Therefore, \(9/s + t = 4\) hours.

In the second scenario:  
Walking time \(=\) \(9 / (s + 2)\) hours.  
Total time \(=\) \(9/(s+2) + t = 2.4\) hours.

Now, we have two equations:  
1. \(9/s + t = 4\)  
2. \(9/(s+2) + t = 2.4\)

We can subtract equation 2 from equation 1 to eliminate \(t\):  
\((9/s + t) - (9/(s+2) + t) = 4 - 2.4\)

Simplify:  
\(9/s - 9/(s+2) = 1.6\)

Factor out 9:  
\(9[1/s - 1/(s+2)] = 1.6\)

Compute the difference inside the brackets:  
\(1/s - 1/(s+2) = (s+2 - s)/(s(s+2)) = 2/(s(s+2))\)  
So, \(9*(2)/(s(s+2)) = 1.6\)

Simplify:  
\(18/(s(s+2)) = 1.6\)

Multiply both sides by \(s(s+2)\):  
\(18 = 1.6\, s(s+2)\)

Divide both sides by 1.6:  
\(s(s+2) = 18 / 1.6\)

Calculate \(18 / 1.6\):  
\(18 / 1.6 = 11.25\)

So, \(s^2 + 2s - 11.25 = 0\)

Now, solve the quadratic equation:  
\(s^2 + 2s - 11.25 = 0\)

Using quadratic formula:  
\(s = [-2 \pm \sqrt{4 + 45}] / 2\)

Since speed can't be negative:  
\(s = [-2 + \sqrt{49}] / 2\)  
\(\sqrt{49} = 7\)

Thus, \(s = (-2 +7)/2 = 5/2 = 2.5\) km/h

Now, we can find \(t\) from equation 1:  
\(9/s + t = 4\)  
\(9/2.5 + t = 4\)

Compute \(9/2.5 = 3.6\)  
So, \(3.6 + t = 4\)

Thus, \(t = 0.4\) hours \(= 0.4*60 = 24\) minutes.

Now, when her speed is \(s + 0.5 = 2.5 + 0.5 = 3\) km/h.

We need to find the total time including coffee shop.

First, compute walking time:  
\(9 / 3 = 3\) hours.

Total time \(=\) walking time \(+\) coffee shop time \(= 3 + 0.4 = 3.4\) hours.

Convert to minutes: \(3.4*60 = 204\) minutes.

So, the total time is 204 minutes.

\textless/think\textgreater

The total time Aya takes when walking at \( s + \frac{1}{2} \) km/h is \(\boxed{204}\) minutes.
\end{flushleft}

\end{tcolorbox}

\begin{tcolorbox}[modelbox={GRPO (\#Tokens = 1763)}]
\begin{flushleft}
Okay, so I have this problem about Aya's walking and coffee shop time. Let me try to figure it out step by step. \\

First, the problem says that Aya walks 9 kilometers every morning and stops at a coffee shop. When she walks at a speed of $s$ km/h, it takes her 4 hours, including $t$ minutes spent in the coffee shop. Then, when she increases her speed to $s+2$ km/h, the walk takes her 2 hours and 24 minutes, but this time includes the same $t$ minutes in the coffee shop. \\

I need to find the time it takes for Aya to walk at a speed of $s + \tfrac12$ km/h, including the $t$ minutes in the coffee shop. \\

Let me break down the information given: \\

\begin{enumerate}
\item When walking at speed $s$ km/h, the total time taken is 4 hours, including $t$ minutes in the coffee shop. So, the walking time is 4 hours minus $t$ minutes. But since speed is in km/h, maybe it's better to convert everything into hours.
\item When walking at speed $s + 2$ km/h, the total time is 2 hours and 24 minutes, which is $2 + \frac{24}{60} = 2.4$ hours. Again, the walking time is total time minus $t$ minutes.
\end{enumerate}

So, let me denote: \\

\begin{itemize}
\item Walking speed: $s$ km/h
\item Walking speed in the second case: $s + 2$ km/h
\item Total time in first case: 4 hours
\item Total time in second case: 2.4 hours
\item Both cases include $t$ minutes in the coffee shop.
\end{itemize}

We need to find the total time when walking speed is $s + \tfrac12$ km/h. \\

First, let's convert the walking times into hours to make calculations easier. \\

For the first scenario: \\

Total time $= 4$ hours \\
Time spent walking $= 4 - \tfrac{t}{60}$ hours (since $t$ is in minutes) \\

Distance walked is 9 km, so using the formula: \\

Distance $=$ speed $\times$ time \\

So, $9 = s \times \left(4 - \tfrac{t}{60}\right)$ \\

Similarly, for the second scenario: \\

Total time $= 2.4$ hours \\
Time spent walking $= 2.4 - \tfrac{t}{60}$ hours \\

Distance walked is 9 km again: \\

$9 = (s + 2) \times \left(2.4 - \tfrac{t}{60}\right)$ \\

Now, we have two equations: \\

1) $9 = s \times \left(4 - \tfrac{t}{60}\right)$ \\
2) $9 = (s + 2) \times \left(2.4 - \tfrac{t}{60}\right)$ \\

Since both equal 9, we can set them equal to each other: \\

$s \times \left(4 - \tfrac{t}{60}\right) = (s + 2) \times \left(2.4 - \tfrac{t}{60}\right)$ \\

Let me expand both sides: \\

Left side: $4s - s\left(\tfrac{t}{60}\right)$ \\
Right side: $(s + 2)\left(2.4 - \tfrac{t}{60}\right) = 2.4s - s\left(\tfrac{t}{60}\right) + 4.8 - 2\left(\tfrac{t}{60}\right)$ \\

Simplify both sides: \\

Left: $4s - \dfrac{s t}{60}$ \\
Right: $2.4s - \dfrac{s t}{60} + 4.8 - \dfrac{2 t}{60}$ \\

Now, let's write the equation: \\

$4s - \dfrac{s t}{60} = 2.4s - \dfrac{s t}{60} + 4.8 - \dfrac{2 t}{60}$ \\

Notice that the $-\dfrac{s t}{60}$ terms on both sides cancel each other out. So we can subtract $2.4s$ from both sides: \\

$4s - 2.4s = - \dfrac{s t}{60} + \dfrac{s t}{60} + 4.8 - \dfrac{2 t}{60}$ \\

Simplify: \\

$1.6s = 4.8 - \dfrac{2 t}{60}$ \\

Simplify $1.6s = 4.8 - \dfrac{t}{30}$ \\

Let me write that as: \\

$1.6s + \dfrac{t}{30} = 4.8$ \\

\textbf{Equation (A):} $1.6s + \dfrac{t}{30} = 4.8$ \\

Now, let's go back to the first equation: \\

$9 = s \times \left(4 - \tfrac{t}{60}\right)$ \\

Let me write this as: \\

$9 = 4s - \dfrac{s t}{60}$ \\

Multiply both sides by 60 to eliminate denominators: \\

$9 \times 60 = 4s \times 60 - s t$ \\

$540 = 240s - s t$ \\

Let's write this as: \\

$s t = 240s - 540$ \\

Divide both sides by $s$ (assuming $s \neq 0$): \\

$t = 240 - \dfrac{540}{s}$ \\

\textbf{Equation (B):} $t = 240 - \dfrac{540}{s}$ \\

Now, substitute Equation (B) into Equation (A): \\

$1.6s + \dfrac{240 - \tfrac{540}{s}}{30} = 4.8$ \\

Simplify: \\

$1.6s + \dfrac{240}{30} - \dfrac{540}{30s} = 4.8$ \\

Calculate: \\

$1.6s + 8 - \dfrac{18}{s} = 4.8$ \\

Now, subtract 4.8 from both sides: \\

$1.6s + 8 - \dfrac{18}{s} - 4.8 = 0$ \\

Simplify: \\

$1.6s + 3.2 - \dfrac{18}{s} = 0$ \\

Multiply both sides by $s$ to eliminate denominator: \\

$1.6s^{2} + 3.2s - 18 = 0$ \\

Now, we have a quadratic equation: \\

$1.6s^{2} + 3.2s - 18 = 0$ \\

Let me multiply both sides by 10 to eliminate decimals: \\

$16s^{2} + 32s - 180 = 0$ \\

Divide all terms by 4 to simplify: \\

$4s^{2} + 8s - 45 = 0$ \\

Now, solve for $s$ using quadratic formula: \\

$s = \dfrac{-b \pm \sqrt{b^{2} - 4ac}}{2a}$ \\

Where $a = 4$, $b = 8$, $c = -45$ \\

Calculate discriminant: \\

$D = 8^{2} - 4 \times 4 \times (-45) = 64 + 720 = 784$ \\

Square root of $D$: $\sqrt{784} = 28$ \\

So, \\

$s = \dfrac{-8 \pm 28}{2 \times 4} = \dfrac{-8 \pm 28}{8}$ \\

We have two solutions: \\

1) $s = \dfrac{-8 + 28}{8} = \dfrac{20}{8} = 2.5$ km/h \\

2) $s = \dfrac{-8 - 28}{8} = \dfrac{-36}{8} = -4.5$ km/h \\

Since speed cannot be negative, $s = 2.5$ km/h \\

Now, find $t$ using Equation (B): \\

$t = 240 - \dfrac{540}{s} = 240 - \dfrac{540}{2.5}$ \\

Calculate $\dfrac{540}{2.5}$: \\

$540 / 2.5 = 216$ \\

So, \\

$t = 240 - 216 = 24$ minutes \\

So, $t = 24$ minutes. \\

Now, when Aya walks at $s + \tfrac12$ km/h, which is $2.5 + 0.5 = 3$ km/h. \\

We need to find the total time including $t$ minutes in the coffee shop. \\

First, calculate the walking time at 3 km/h. \\

Distance is 9 km, speed is 3 km/h, so time is $\dfrac{9}{3} = 3$ hours. \\

But total time is walking time plus $t$ minutes. \\

Walking time is 3 hours, which is 180 minutes. \\

Plus $t = 24$ minutes. \\

So total time is $180 + 24 = 204$ minutes. \\

Therefore, the walk takes her 204 minutes, including the $t$ minutes spent in the coffee shop. \\

\textless/think\textgreater \\

The number of minutes the walk takes her, including the $t$ minutes spent in the coffee shop, is $\boxed{204}$. 
\end{flushleft}

\end{tcolorbox}


\end{document}